\newtheorem{mydef}{Definition}[section]
\newtheorem{lemm}{Lemma}[section]
\newtheorem{proposition}{Proposition}[section]
\newtheorem{property}{Property}[section]
\journal{arXiv}
\begin{document}

\begin{frontmatter}

\title{RIn-Close\_CVC2: an even more efficient enumerative algorithm for biclustering of numerical datasets}

\author[myaddress1]{Rosana~Veroneze\corref{mycorrespondingauthor}}
\cortext[mycorrespondingauthor]{Corresponding author}
\ead{veroneze@dca.fee.unicamp.br}

\author[myaddress1]{Fernando~J.~Von~Zuben}

\address[myaddress1]{University of Campinas (DCA/FEEC), 400 Albert Einstein Street, Campinas, SP, Brazil}

\begin{abstract}
RIn-Close\_CVC is an efficient (take polynomial time per bicluster), complete (find all maximal biclusters), correct (all biclusters attend the user-defined level of consistency) and non-redundant (all the obtained biclusters are maximal and the same bicluster is not enumerated more than once) enumerative algorithm for mining maximal biclusters with constant values on columns in numerical datasets. Despite RIn-Close\_CVC has all these outstanding properties, it has a high computational cost in terms of memory usage because it must keep a symbol table in memory to prevent a maximal bicluster to be found more than once. In this paper, we propose a new version of RIn-Close\_CVC, named RIn-Close\_CVC2, that does not use a symbol table to prevent redundant biclusters, and keeps all these four properties. We also prove that these algorithms actually possess these properties. Experiments are carried out with synthetic and real-world datasets to compare RIn-Close\_CVC and RIn-Close\_CVC2 in terms of memory usage and runtime. The experimental results show that RIn-Close\_CVC2 brings a large reduction in memory usage and, in average, significant runtime gain when compared to its predecessor.
\end{abstract}

\begin{keyword}
Biclustering in numerical datasets  \sep Maximal biclusters \sep Efficient enumeration
\end{keyword}

\end{frontmatter}

\section{Introduction}
\label{sec:intro}

Biclustering is a powerful data analysis technique and has been successfully applied in various application domains, such as analysis of gene expression data, text mining, collaborative filtering, treatment of missing data, and dimensionality reduction. However, due to the complexity of the biclustering problems, most of the proposed biclustering algorithms are heuristic-based, leading to sub-optimal solutions.

Even so, in the areas of Formal Concept Analysis (FCA), Frequent Pattern Mining (FPM), and graph theory, we have plenty of algorithms for enumerating all maximal biclusters with constant values (CTV) in a binary dataset. These maximal CTV biclusters are called formal concepts in FCA, closed frequent itemsets in FPM (being more specific, a closed frequent itemset corresponds to the column-set of a bicluster), and maximal bicliques in graph theory. Some examples of these algorithms are: Makino and Uno \cite{MakinoEtAl2004}, Eppstein \emph{et al.} \cite{EppsteinEtAL2010}, Close-by-One (CbO) \cite{Kuznetsov1999}, In-Close \cite{Andrews2009}, In-Close2 \cite{Andrews2011}, In-Close5 \cite{Andrews2018}, FCbO \cite{Krajca2010}, CHARM \cite{ZakiEtAL2002}, and LCM \cite{UnoEtAL2004}. Their enumeration process is characterized by being:

\begin{enumerate}
	\item Efficient: it takes polynomial time per pattern, i.e., it takes polynomial time to enumerate the first bicluster and takes polynomial time between the enumeration of two consecutive biclusters. It is the best one can computationally do in such scenario. If done properly, such algorithm will have time complexity linear in the number of biclusters and polynomial in the input size. Moreover, if the number of maximal biclusters is polynomial in the input size, the overall algorithm will be a polynomial time algorithm.
	\item Complete: it finds all maximal biclusters in a dataset. A complete enumeration guarantees to include the results produced by any other biclustering solution (given the same restrictions of internal consistency and size). So, such biclustering solution is at least of equal quality, but probably of better quality, when compared with the solution provided by any other contender. 
	\item Correct: all biclusters obey the user-defined measure of internal consistency. For instance, in the case of the aforementioned algorithms, all biclusters are submatrices of ones.
	\item Non-redundant: all biclusters are maximal and it does not enumerate the same maximal bicluster more than once. It is a very important property because the number of biclusters produced from a dataset can be very large. So, it is useful to identify the smallest representative set of biclusters from which all other biclusters can be derived \cite{TanEtAl2005}. The set of all maximal biclusters is necessary and sufficient to capture all the information about the biclusters, and has a much smaller cardinality than the set of all attainable biclusters \cite{Zaki2000}. It is important to note that the algorithm must have a smart solution to avoid redundancy, otherwise it will not be efficient. For instance, a procedure to be avoided is to check if a new bicluster is not redundant by comparing with all previously mined biclusters.
\end{enumerate}

Recently, Veroneze \textit{et al.} \cite{VeronezeEtAl2017} proposed a family of algorithms, called RIn-Close, also exhibiting these four key properties when enumerating biclusters in numerical (not only binary, but also integer or real-valued) data matrices. It may be considered a significant achievement, given that, before the RIn-Close family of algorithms, finding biclusters in numerical data matrices was accomplished by algorithms not exhibiting those four properties, or by discretizing and itemizing the numerical matrix, ultimately treating binary matrices. Notice that the RIn-Close family of algorithms is capable of mining perfect and perturbed biclusters with constant values on rows (CVR) and constant values on columns (CVC), and also perfect biclusters with coherent values (CHV). There is also an algorithm to enumerate perturbed CHV biclusters, but in this case the algorithm does not take polynomial time per bicluster.

Even though Veroneze \textit{et al.} \cite{VeronezeEtAl2017} proposed a number of improvements when we talk about the enumeration of biclusters in numerical datasets, there are still improvements that would bring many benefits to the users of RIn-Close algorithms. We focus here on the RIn-Close\_CVC algorithm. This algorithm has already been applied with great success in gene ontology enrichment analysis \cite{Veroneze2016}, analysis and identification of biomarkers \cite{Veroneze2016}, and identification of discriminative patterns in labeled datasets \cite{VeronezeVonZuben2017report}. RIn-Close\_CVC is very efficient in terms of runtime, but has a high computational cost in terms of memory usage. It must keep a symbol table in memory, whose keys are the row-sets of each found bicluster, to prevent a maximal bicluster to be found more than once. Thus, the challenge is to find some alternative to this symbol table, characterized by a lower computational cost in terms of memory, but without losing the algorithm efficiency.

We propose a new version of RIn-Close\_CVC, named RIn-Close\_CVC2, that does not use a symbol table to prevent redundant biclusters, and keeps these four key properties. The experimental results show that in addition to a large reduction in memory usage, the new algorithm also brings in average significant runtime gain, even having a higher worst-case time-complexity. Another original contribution of this work is to show formally that RIn-Close\_CVCP, RIn-Close\_CVC and RIn-Close\_CVC2 have the four aforementioned properties.

The remainder of the paper is organized as follows. Section~\ref{sec:bic} introduces definitions and mathematical notation for biclustering, and gives a short biclustering view of FCA. Section~\ref{sec:recap} reviews the algorithms In-Close2 \cite{Andrews2011}, RIn-Close\_CVCP \cite{VeronezeEtAl2017}, and RIn-Close\_CVC \cite{VeronezeEtAl2017}. Section~\ref{sec:rinclosecvc2} presents the main contribution of this paper: the new version of RIn-Close\_CVC. Experimental results are discussed in Section~\ref{sec:expresults} , and we conclude in Section~\ref{sec:conclusion}.

\section{Biclustering}
\label{sec:bic}
The formalism used here to describe a bicluster is based on \cite{VeronezeEtAl2017}.

Let $\mathbf{A}_{n \times m}$ be a data matrix with the row index set $X = \left \{ 1, 2,..., n \right \}$ and the column index set $Y = \left \{ 1, 2, ...,m \right \}$. Each row represents an object, and each column represents an attribute. Each element $a_{ij} \in \mathbf{A}$ represents the relationship between object $i$ and attribute $j$. We use $(X,Y)$ to denote the entire matrix $\mathbf{A}$. Considering that $I \subseteq X$ and $J \subseteq Y$, $\mathbf{A}_{IJ} = (I, J)$ denotes the submatrix of $\mathbf{A}$ with the row index subset $I$ and column index subset $J$.

\begin{mydef}
A bicluster is a submatrix $(I,J)$ of the data matrix $\mathbf{A}_{n \times m}$ such that the rows in the index subset $I = \left \{ i_1,..., i_k \right \}$ ($I \subseteq X$ and $k \leq n$) exhibit a consistent pattern across the columns in the index subset $J = \left \{ j_1,..., j_s \right \}$ ($J \subseteq Y$ and $s \leq m$), and vice-versa.
\label{def:bic}
\end{mydef}

Thus, a bicluster $(I,J)$ is a $k \times s$ submatrix of the matrix $\mathbf{A}$, not necessarily with contiguous rows and columns, such that it meets a certain consistency criterion \cite{MadeiraOliveira2004}. A biclustering algorithm looks for a set of biclusters $\mathfrak{B} = (I_l, J_l)$, $l = 1, ..., q$, with the total number of biclusters, $q$, being dependent on the characteristics of the selected biclustering algorithm, on the constraints imposed, and on the behaviour of the dataset being analyzed.

Considering these aspects of consistency, there are four major types of biclusters \cite{MadeiraOliveira2004}: ($i$) biclusters with constant values (CTV), ($ii$) biclusters with constant values on columns (CVC) or rows (CVR), ($iii$) biclusters with coherent values (CHV), and ($iv$) biclusters with coherent evolutions (CHE). There are many subtypes of CHE biclusters, and the order-preserving submatrix (OPSM) biclusters are the most popular among them.

Generally speaking, the bicluster enumeration problem, that we are interested in, can be described as:\\

\noindent \textbf{Problem}: To mine all maximal biclusters in a data matrix so that the obtained biclusters obey the desired level of consistency, and the enumeration process is characterized by being (1) efficient, (2) complete, (3) correct, and (4) non-redundant.\\

In this paper, we are going to focus on CTV and CVC biclusters, so we will give their definitions in what follows. See \cite{MadeiraOliveira2004, VeronezeEtAl2017} for the definitions and examples of all types of biclusters. In our definitions, a user-defined parameter $\epsilon \geq 0$ determines the maximum perturbation (residue) allowed in a consistent bicluster. Perfect biclusters are mined using $\epsilon = 0$, whereas perturbed biclusters are mined using $\epsilon > 0$. Remark that the RIn-Close family of algorithms \cite{VeronezeEtAl2017} has specialized algorithms for mining perfect biclusters, that are faster than their versions for mining perturbed biclusters.

\begin{mydef}[CTV biclusters]
A \emph{CTV bicluster} is a submatrix $(I, J)$ of a data matrix $\mathbf{A}_{n \times m}$ such that

\begin{equation}
  \max_{i \in I, j \in J} (a_{ij}) - \min_{i \in I, j \in J} (a_{ij}) \leq \epsilon \text{, with } \epsilon \ge 0.
	\label{eq:ctvbic}
\end{equation}

\label{def:ctvbic}
\end{mydef}

\begin{mydef}[CVC biclusters]
A \emph{CVC bicluster} is a submatrix $(I, J)$ such that

\begin{equation}
  \max_{i \in I} (a_{ij}) - \min_{i \in I} (a_{ij}) \leq \epsilon, \forall j \in J \text{, with } \epsilon \ge 0.
	\label{eq:cvcbic}
\end{equation}

\label{def:cvcbic}
\end{mydef}

The definition of a CVR bicluster is the equivalent transpose of the definition of a CVC bicluster. So, we can mine CVR biclusters by transposing the original data matrix and using an algorithm to mine CVC biclusters. Note that CVC and CVR biclusters are generalizations of CTV biclusters, as demonstrated in Lemma \ref{lem:bicCTVtoCVC}.

\begin{lemm}
A CTV bicluster with residue $\epsilon$ is a CVC (CVR) bicluster with residue $\epsilon'$ such that $\epsilon' \leq \epsilon$.
\label{lem:bicCTVtoCVC}
\end{lemm}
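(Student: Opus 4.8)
The plan is to exhibit the witnessing residue $\epsilon'$ explicitly and then verify $\epsilon' \le \epsilon$ by a direct term-by-term comparison. Given a CTV bicluster $(I,J)$ of $\mathbf{A}_{n\times m}$ satisfying \eqref{eq:ctvbic}, I would set
\[
\epsilon' \;=\; \max_{j \in J}\Bigl( \max_{i \in I} a_{ij} - \min_{i \in I} a_{ij} \Bigr).
\]
By construction $\max_{i\in I} a_{ij} - \min_{i\in I} a_{ij} \le \epsilon'$ for every $j\in J$, so $(I,J)$ is a CVC bicluster with residue $\epsilon'$ in the sense of Definition~\ref{def:cvcbic}; moreover $\epsilon' \ge 0$ since each per-column range is nonnegative, so $\epsilon'$ is an admissible parameter value.

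For the bound $\epsilon' \le \epsilon$, I would fix an arbitrary $j \in J$ and use the monotonicity of maxima and minima under enlargement of the index set over which they are taken: since $\{(i,j) : i \in I\} \subseteq \{(i,j') : i\in I,\, j'\in J\}$,
\[
\max_{i\in I} a_{ij} \;\le\; \max_{i\in I,\, j'\in J} a_{ij'}
\qquad\text{and}\qquad
\min_{i\in I} a_{ij} \;\ge\; \min_{i\in I,\, j'\in J} a_{ij'}.
\]
Subtracting these two inequalities shows that the per-column range over $I$ is at most the global range of $(I,J)$, which is $\le \epsilon$ by \eqref{eq:ctvbic}. Since this holds for every $j\in J$, taking the maximum over $j$ yields $\epsilon' \le \epsilon$.

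Finally, the CVR case follows by applying the CVC argument to the transpose $\mathbf{A}^{\top}$, noting that the quantity controlled in \eqref{eq:ctvbic} is symmetric in rows and columns (so a CTV bicluster transposes to a CTV bicluster) and that a CVC bicluster of $\mathbf{A}^{\top}$ is exactly a CVR bicluster of $\mathbf{A}$. There is no genuine obstacle here; the only point requiring a little care is that $\epsilon'$ must be taken as the \emph{worst} column range — not, say, an average — so that the CVC condition holds for \emph{all} columns of $J$ simultaneously, after which $\epsilon' \le \epsilon$ is immediate from the sub-index monotonicity above.
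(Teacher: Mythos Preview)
Your proof is correct and follows essentially the same approach as the paper: both argue that the per-column range is bounded by the global range of the CTV bicluster, hence $\epsilon' \le \epsilon$. Your version is simply more explicit, writing out the witnessing $\epsilon'$ and the monotonicity inequalities, whereas the paper states the same idea informally in two sentences.
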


\begin{proof}
If a CTV bicluster has residue $\epsilon$, it means that the maximum pairwise variation between its elements is $\epsilon$. Given that the elements are not restricted to be part of the same column, the maximum variation in its columns may be less than $\epsilon$. Therefore, $\epsilon' \leq \epsilon$.

The proof for a CVR bicluster is equivalent.
\end{proof}

\subsection{Maximality and other properties}
\label{subsec:bicMaxProp}

\hspace{1pt}

\begin{mydef}[Maximal bicluster]
Given the desired characteristics of internal consistency (such as the user-defined maximum perturbation $\epsilon$), a bicluster $(I,J)$ is called a \emph{maximal bicluster} if and only if:
\begin{itemize}
	\item $\forall x \in X \setminus I$, $(I \cup \{x\}, J)$ is not a \textit{correct} bicluster (the bicluster does not meet the desired characteristics of internal consistency), and
	\item $\forall y \in Y \setminus J$, $(I, J \cup \{y\})$ is not a \textit{correct} bicluster.
\end{itemize}
\label{def:maximal}
\end{mydef}

\begin{property}[Anti-Monotonicity]
Let $(I,J)$ be a correct bicluster. Any submatrix $(I', J')$, where $I' \subseteq I$ and $J' \subseteq J$, is also a \textit{correct} bicluster.
\end{property}

\begin{property}[Monotonicity]
Let $(I,J)$ be a maximal correct bicluster. Any supermatrix $(I',J')$, where $I' \times J' \supset I \times J$, is not a \textit{correct} bicluster.
\end{property}

Usually, the efficient enumerative algorithms of FCA and FPM areas are based on the monotonicity and anti-monotonicity properties \citep{Besson2007}. In fact, we do not know any efficient enumerative biclustering algorithm that is not based on these properties. The given definitions of CTV and CVC biclusters are in accordance with these two properties, as well as the RIn-Close family of algorithms.

\subsection{A short biclustering view about Formal Concept Analysis}
\label{subsec:fca}

Let $\mathbf{A}_{n \times m}$ be a binary data matrix. For a subset $I \subseteq X$, we define
\begin{equation}
  I^{\uparrow} = \{y \in Y| a_{iy} = 1, \forall i \in I\}
\end{equation}
\noindent as the set of attributes common to all the objects in $I$. Similarly, for a subset $J \subseteq Y$, we define:
\begin{equation}
  J^{\downarrow} = \{x \in X| a_{xj} = 1, \forall j \in J\}
	\label{eq:blinha}
\end{equation}
\noindent as the set of objects common to all the attributes in $J$.

\begin{mydef}[Formal Concept]
A formal concept of a binary data matrix $\mathbf{A}_{n \times m}$ is a pair $(I,J)$ with $I \subseteq X$, $J \subseteq Y$, $I^{\uparrow} = J$, and $J^{\downarrow}=I$.
\end{mydef}

By the definition, we see that though many subsets $I$ can generate the same subset $J$, only the largest (closed) subset $I$ is part of a formal concept. The same reasoning is valid for $J$. Therefore, a formal concept is a maximal biclusters of 1's in a binary data matrix.

\section{Recap}
\label{sec:recap}

RIn-Close algorithms are generalizations of In-Close2 \cite{Andrews2011} to deal with numerical datasets, aiming at preserving the nice properties of In-Close2 restricted to binary datasets. More specifically, RIn-Close\_CVC \cite{VeronezeEtAl2017}, which mines perturbed CVC biclusters, is a generalization of RIn-Close\_CVCP \cite{VeronezeEtAl2017}, which mines perfect CVC biclusters. Therefore, in this section, we will review In-Close2 \cite{Andrews2011}, RIn-Close\_CVCP \cite{VeronezeEtAl2017} and RIn-Close\_CVC \cite{VeronezeEtAl2017} algorithms. Respectively, the problems that they are able to solve are:\\

\noindent \textbf{Problem 1}: To mine all maximal CTV biclusters of 1's from a binary data matrix $\mathbf{A}_{n \times m}$, with the enumeration process being (1) efficient, (2) complete, (3) correct, and (4) non-redundant.\\

\noindent \textbf{Problem 2}: To mine all maximal perfect CVC biclusters from a numerical data matrix $\mathbf{A}_{n \times m}$, with the enumeration process being (1) efficient, (2) complete, (3) correct, and (4) non-redundant.\\

\noindent \textbf{Problem 3}: Given a user-defined parameter $\epsilon \ge 0$, to mine all maximal CVC biclusters with maximum perturbation $\epsilon$ from a numerical data matrix $\mathbf{A}_{n \times m}$, with the enumeration process being (1) efficient, (2) complete, (3) correct, and (4) non-redundant.

\begin{figure*}
  \centering
	
	\subfigure[Generation of a single descendant by In-Close2.]{
		\includegraphics[trim=3.5cm 13cm 18cm 2.5cm, clip, scale=0.6]{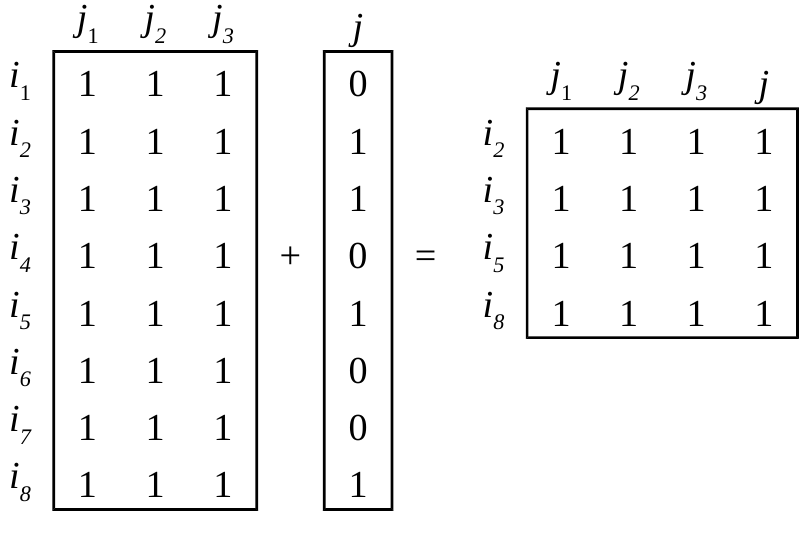}
		\label{fig:variosFilhos0}
	}
	
	\subfigure[Generation of multiple descendants, d1, d2, d3 and d4, by RIn-Close\_CVCP.]{
		\includegraphics[trim=3.5cm 13cm 17.5cm 2.5cm, clip, scale=0.6]{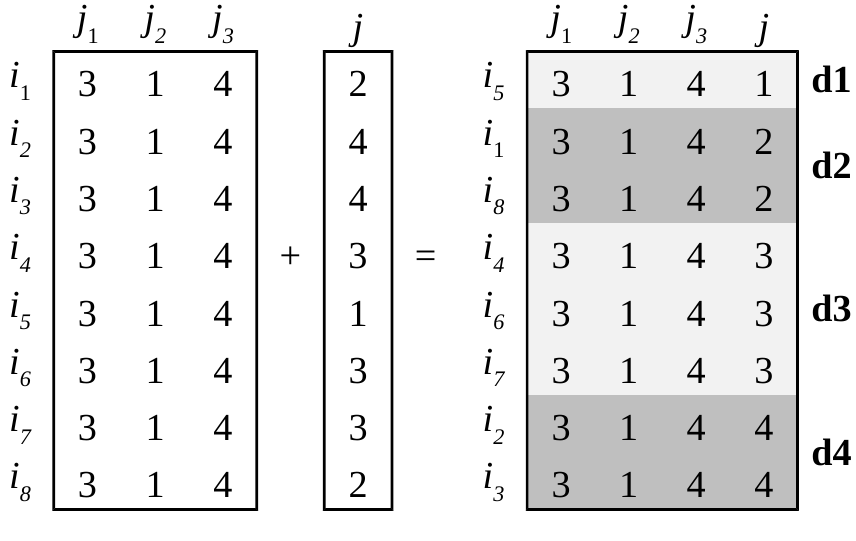}
		\label{fig:variosFilhos1}
	}
	
	\subfigure[Generation of multiple descendants, d1, d2 and d3, by RIn-Close\_CVC (considering $\epsilon = 1$).]{
		\includegraphics[trim=9.5cm 11.5cm 10.5cm 1cm, clip, scale=0.6]{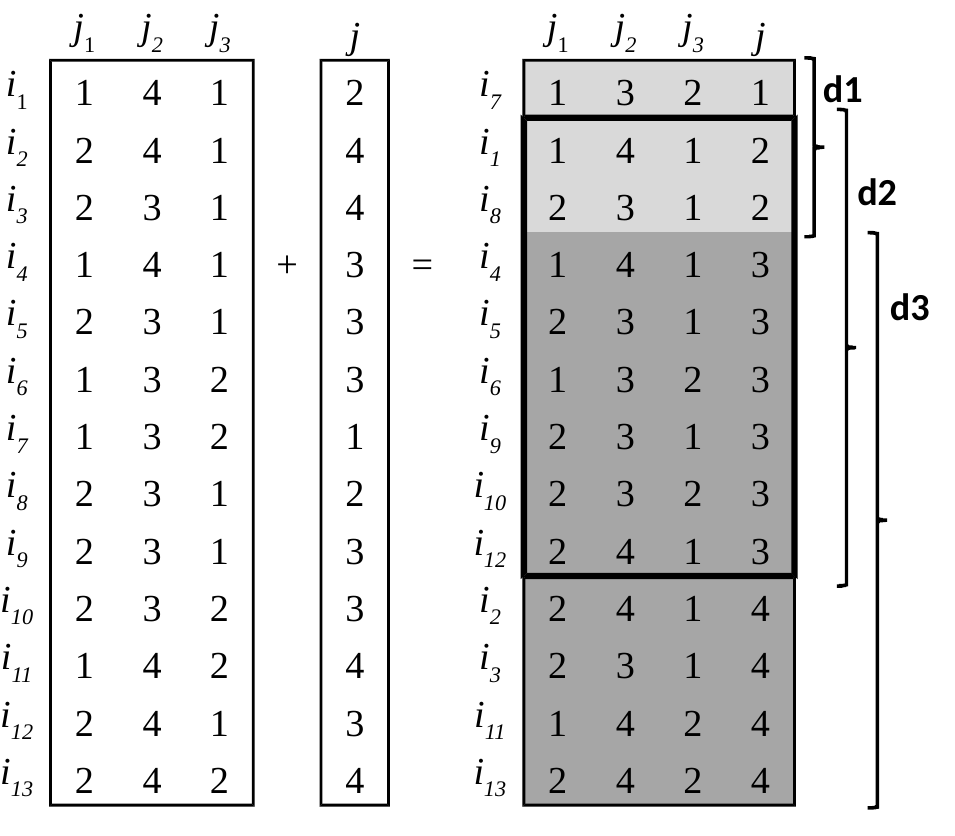}
		\label{fig:variosFilhos2}
	}
	
	\caption{Examples of the generation of descendants by (a) In-Close2, (b) RIn-Close\_CVCP, and (c) RIn-Close\_CVC (based on \cite{VeronezeEtAl2017}).}
	\label{fig:variosFilhos}
\end{figure*}

\subsection{In-Close2}
\label{subsec:inclose2}

In discrete mathematics, combinations have a lexicographical order, for instance, \{1, 2, 3\} comes before \{1, 2, 4\}, and also before \{1, 3\} \cite{Andrews2009}. Ganter \cite{Ganter1984} showed how the lexicographical order of formal concepts can be used to avoid the search of repeated results.

In-Close \cite{Andrews2009} and its successors \cite{Andrews2011,Andrews2015,Andrews2017,Andrews2018} are based conceptually on CbO \cite{Kuznetsov1999}. These algorithms use the lexicographic approach for mining formal concepts (biclusters), thus avoiding the discovery of the same formal concept more than once. They maintain a \emph{current attribute}: The next generated formal concept is new (\emph{canonical} \cite{Kuznetsov1996}) if its column-set contains no attribute preceding the current attribute.

In In-Close algorithms \cite{Andrews2009,Andrews2011,Andrews2015,Andrews2017,Andrews2018}, each formal concept $(I,J)$ is incrementally closed, i.e., its column-set $J$ is incrementally completed with all possible columns for the row-set $I$, hence the name of the algorithms: \textit{In}(cremental)-\textit{Close}(ure) \cite{Andrews2015}.

Algorithm \ref{alg:inclose2} shows In-Close2 pseudocode. It is invoked with an initial pair $(I,J) = (X,\emptyset)$ (which is called the \textit{supremum} formal concept), and an initial attribute $y = 1$. The user-defined parameter minimum number of rows, $minRow$, and the binary data matrix $\mathbf{A}_{n \times m}$ were omitted from the pseudocode input for simplicity and clarity, but they are quite obvious given the context.

During the closure of a formal concept, In-Close2 iterates across the attributes (line 1). If the current attribute $j$ is not an inherited attribute (line 2), In-Close2 computes the row-set of a candidate new formal concept $G$ (line 3). If the size of $G$ is equal to the size of $I$ (line 4), this means that the row-set $G$ is equal to the current row-set $I$, then the attribute $j$ is added to the current column-set $J$ (line 5). Otherwise, In-Close2 tests if the size of $G$ is greater than or equal to the value of the user-defined parameter $minRow$ and tests if $G$ is canonical (line 6). If yes, the current formal concept $(I, J)$ will give rise to a child formal concept, which is placed in a queue (line 7). After the closure of the current formal concept $(I, J)$, In-Close2 starts closing its descendants (lines 9 to 11). Notice that the descendants of the bicluster $(I, J)$ inherit its column-set $J$ (line 10).

\begin{algorithm}
\caption{\textit{In-Close2}\\ComputeBiclustersFrom$((I,J),y)$}
\label{alg:inclose2}
\begin{algorithmic}[1]
  \small
  \REQUIRE Formal Concept $(I,J)$ to be closed, current attribute $y$
  \FOR{$j \leftarrow y$ to $m$}
    \IF{$j \notin J$}
	  \STATE $G \leftarrow I \cap \{j\}^\downarrow$
      \IF{$|G| = |I|$}
        \STATE $J \leftarrow J \cup \{j\}$
      \ELSIF{$|G| \geq minRow$ \AND $G$ is canonical}
          \STATE PutInQueue($G, j$)
      \ENDIF
    \ENDIF
  \ENDFOR
  \STATE ProcessBicluster($I,J$)
  \WHILE{GetFromQueue($G, j$)}
    \STATE $H \leftarrow J \cup \{j\}$
    \STATE ComputeBiclustersFrom($(G,H),j+1$)
  \ENDWHILE
\end{algorithmic}
\end{algorithm}

The canonicity test is as follows. Letting $J$ be the current column-set, and $j$ be the current attribute, the row-set $G$ of a candidate new formal concept is not canonical if

\begin{equation}
    \exists k \in Y \setminus J \: | \: [k < j] \: \wedge \: [a_{gk} = 1, \forall g \in G].
	\label{eq:incloe2_iscan}
\end{equation}

In addition to the minimum number of rows $minRow$, we can easily add a minimum number of columns $minCol$ to In-Close2. While In-Close2 loops through the attributes, a formal concept $(I, J)$ can be discarded if, even adding all remaining attributes to its column-set, it will not meet the minimum number of columns $minCol$ (therefore, its next descendants will not meet the minimum number of columns $minCol$ as well). Although this restriction can be checked only during the closure of a formal concept, it will also prune the search space and save computational resources because ($i$) it stops the construction of a formal concept that will be discarded later, given that it does not meet the restriction $minCol$, and ($ii$) it avoids generating descendants that will not meet the restriction $minCol$ as well. The same is true for the RIn-Close algorithms \cite{VeronezeEtAl2017}, which are conceptually based on In-Close2.

See Andrews \cite{Andrews2015} for details and proofs associated with In-Close2, and its predecessors In-Close and CbO too.

\textbf{Worst-case complexity} - The worst-case time of checking the canonicity is $O(nm)$. Thus, the overall worst-case of In-Close2 is $O(qnm^2)$, where $q$ is the number of formal concepts in the data matrix \cite{Andrews2015}.

\textbf{Generalization of In-Close2} - Veroneze \textit{et al.} \cite{VeronezeEtAl2017} chose to generalize and extend the algorithm In-Close2 to numerical datasets because: (\emph{i}) it is easy to understand; (\emph{ii}) it is one of the fastest algorithms for enumerating maximal biclusters in binary datasets; (\emph{iii}) it has support to the desired minimum number of rows in a bicluster (the parameter $minRow$); (\emph{iv}) it is easy to incorporate a support to the desired minimum number of columns in a bicluster (a parameter $minCol$); and (\emph{v}) In-Close2 closes the biclusters incrementally, starting from the \textit{supremum} with all objects in its row-set and no attributes in its column-set. This latter aspect of In-Close2 is very important when working with integer or real-valued data matrices, as we will see in what follows. To exemplify, when mining CVC biclusters, given the current attribute, we can look for the subsets of rows of the current row-set that accomplish the maximum perturbation $\epsilon$.

\subsection{RIn-Close\_CVCP}
\label{subsec:rinclosecvcp}

The generalization of In-Close2 to enumerate all maximal perfect CVC biclusters, called RIn-Close\_CVCP, is straightforward. We have only one major difference: in In-Close2, each bicluster $(I, J)$ can generate just one descendant per attribute, whereas in RIn-Close\_CVCP, each bicluster $(I, J)$ can generate multiple descendants per attribute. It happens because In-Close2 looks for blocks of 1's, whereas RIn-Close\_CVCP looks for maximal blocks of constant values on columns.

Figure~\ref{fig:variosFilhos} illustrates this difference. In our example of Figure~\ref{fig:variosFilhos0}, In-Close2 is closing the bicluster

\begin{equation*}
(\{i_1, i_2, i_3, i_4, i_5, i_6, i_7, i_8\}, \{ j_1, j_2, j_3\}),
\end{equation*}

\noindent and it gives rises to \textbf{one} descendant in attribute $j$: $(\{i_2, i_3, i_5, i_8\}, \{ j_1, j_2, j_3, j\})$. In Figure~\ref{fig:variosFilhos1}, RIn-Close\_CVCP gives rise to \textbf{four} new perfect CVC biclusters in attribute $j$ (\emph{without overlap between their row-sets}, since RIn-Close\_CVCP looks for perfect CVC biclusters): 

\begin{itemize}
	\item (d1) $(\{i_5\}, \{ j_1, j_2, j_3, j\})$,
	\item (d2) $(\{i_1, i_8\}, \{ j_1, j_2, j_3, j\})$,
	\item (d3) $(\{i_4, i_6, i_7\}, \{ j_1, j_2, j_3, j\})$, and
	\item (d4) $(\{i_2, i_3\}, \{ j_1, j_2, j_3, j\})$.
\end{itemize}

Algorithm~\ref{alg:rinclosecvcp} shows the pseudocode of RIn-Close\_CVCP. Notice that RIn-Close\_CVCP is almost the same as In-Close2. It is also invoked with an initial pair $(I, J) = (X, \emptyset)$, and an initial attribute $y = 1$. Again, the user-defined minimum number of rows, $minRow$, and the numerical data matrix $\mathbf{A}_{n \times m}$ were omitted from the pseudocode for simplicity and clarity.

There are basically two differences between In-Close2 and RIn-Close\_CVCP pseudocodes. The first one is that the current attribute $j$ is added to the current column-set $J$ if all values of attribute $j$ and objects $I$ are equal, but not necessarily equal to 1, it may be any value. And the second one is associated with the fact that the bicluster $(I, J)$ can give rise to multiple descendants per attribute. So, RIn-Close\_CVCP computes all row-sets of the possible new biclusters and loops across them.

Letting $I$ be the current bicluster's row-set, and $j$ be the current attribute, the row-sets of the possible new biclusters are given by
 
 \begin{equation}
 \{G \: | \: [G \subseteq I] \; \wedge \; [\max_{i \in G}(\{a_{ij}\}) - \min_{i \in G}(\{a_{ij}\} = 0] \; \wedge \; [G \; \mathrm{is \; maximal}]\}.
 \label{eq:rinc_cvcp_compExt}
 \end{equation}

\noindent Note in the example of Figure~\ref{fig:variosFilhos1} that the elements of the current attribute, $j$, were sorted in order to easily identify all the row-sets of the candidate new biclusters. So, to easily compute the row-sets of the candidate new biclusters, we \textbf{sort} the values of the data matrix $\mathbf{A}$ in rows $I$ and column $j$. After sorting, we just scroll through the vector looking for the blocks of equal values.

\begin{algorithm}
  \caption{\textit{RIn-Close\_CVCP}\\ComputeBiclustersFrom$((I,J),y)$}
  \label{alg:rinclosecvcp}
  \begin{algorithmic}[1]
    \small
    \REQUIRE Bicluster $(I,J)$ to be closed, current attribute $y$
    \FOR{$j \leftarrow y$ to $m$}
      \IF{$j \notin J$}
        \IF{$\max_{i \in I}(a_{ij}) - \min_{i \in I}(a_{ij}) = 0$}
          \STATE $J \leftarrow J \cup \{j\}$
        \ELSE
          \STATE Compute the possible new row-sets \COMMENT{Eq.~\ref{eq:rinc_cvcp_compExt}}
          \FOR{each possible new row-set $G$}
            \IF{$|G| \geq minRow$ \AND $G$ is canonical}
              \STATE PutInQueue($G, j$)
            \ENDIF
          \ENDFOR
        \ENDIF
      \ENDIF
    \ENDFOR
    \STATE ProcessBicluster($I,J$)
    \WHILE{GetFromQueue($G, j$)}
      \STATE $H \leftarrow J \cup \{j\}$
        \STATE ComputeBiclustersFrom($(G,H),j+1$)
    \ENDWHILE
\end{algorithmic}
\end{algorithm}

The test of canonicity is also essentially the same as in In-Close2. Letting $J$ be the current column-set, $j$ be the current attribute, and $G$ be the row-set of a candidate new bicluster, it is not canonical if

\begin{equation}
	\exists k \in Y \setminus J \; | \; [k < j] \: \wedge \: [\max_{g \in G}(a_{gk}) - \min_{g \in G}(a_{gk}) = 0],
	\label{eq:rinc_cvc_p_iscan}
\end{equation}

\noindent i.e., if there is an attribute $k < j$ that we can add to the bicluster $(G, J)$ and it remains a correct perfect CVC bicluster.

\textbf{Worst-case complexity} - To compute the worst-case time of RIn-Close\_CVCP, we need to remember that: as we are looking for perfect CVC biclusters, there is no overlap between the row-sets of the possible new biclusters (see the example of Figure~\ref{fig:variosFilhos1}). So, if there is 1 possible new row-set, it has in the worst case at most $n$ rows; if there is 2 possible new row-sets, they have in the worst case at most $n/2$ rows each; if there is 3 possible new row-sets, they have in the worst case at most $n/3$ rows each; and so on. So, the worst-case time of lines 7 and 8 together of RIn-Close\_CVCP is $O(nm)$. Thus, the overall worst-case time of the RIn-Close\_CVCP algorithm is almost the same as the one for In-Close2: $O(qnm(\log n + m))$, where the difference is due to the use of a sorting algorithm to compute the row-sets of the candidate new biclusters (line 6). Note that usually $\log n \ll m$.

\begin{proposition}
RIn-Close\_CVCP is an (1) efficient, (2) complete, (3) correct, and (4) non-redundant algorithm for mining all maximal perfect CVC biclusters from a numerical data matrix $\mathbf{A}_{n \times m}$.
\end{proposition}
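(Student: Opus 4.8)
The plan is to treat the four properties in turn, and my first move is to recognize that perfect CVC biclustering is ordinary Formal Concept Analysis on a ``nominally scaled'' version of the data, which lets \textbf{completeness} and \textbf{non-redundancy} be inherited from In-Close2. \textbf{Correctness} I would get directly: by induction on the recursion, every pair $(I,J)$ passed to ComputeBiclustersFrom is a perfect CVC bicluster, i.e.\ every attribute of $J$ is constant on $I$. The base case $(X,\emptyset)$ is vacuous; an attribute enters $J$ only in line~4, under the constancy guard of line~3; and a descendant $(G,H)$ has $G\subseteq I$, so by Anti-Monotonicity the inherited attributes of $J$ stay constant on $G$, while the new attribute is constant on $G$ by the defining condition of Eq.~\ref{eq:rinc_cvcp_compExt}. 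Hence ProcessBicluster only ever emits submatrices satisfying Eq.~\ref{eq:cvcbic} with $\epsilon=0$.

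For \textbf{completeness} and \textbf{non-redundancy}, let $\mathbf{B}$ be the binary matrix with one attribute $(j,v)$ per column $j$ and per value $v$ occurring in it, $b_{i,(j,v)}=1$ iff $a_{ij}=v$, and with all scaled attributes of column $j$ placed before those of column $j+1$. Because each row takes exactly one value per column, a maximal perfect CVC bicluster $(I,J)$ of $\mathbf{A}$ (with $a_{ij}=v_j$ on $I$) corresponds bijectively to the formal concept $(I,\{(j,v_j):j\in J\})$ of $\mathbf{B}$; row- and column-maximality transfer verbatim, and the thresholds $minRow$, $minCol$ are preserved since a concept of $\mathbf{B}$ uses at most one scaled attribute per original column. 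I would then check that RIn-Close\_CVCP on $\mathbf{A}$ realizes the same recursion tree (up to sibling order) as In-Close2 on $\mathbf{B}$: the several descendants that RIn-Close\_CVCP produces at an attribute $j$ --- the blocks of Eq.~\ref{eq:rinc_cvcp_compExt}, which partition $I$ --- are exactly those In-Close2 produces while its loop runs through the scaled attributes $(j,v_1),(j,v_2),\dots$ of that column; line~4 here mirrors line~5 of In-Close2 (the case $|G|=|I|$); and the canonicity test of Eq.~\ref{eq:rinc_cvc_p_iscan} coincides with In-Close2's, since an inherited attribute of $J$ pins its scaled value into the scaled column-set, leaving ``$\exists k<j$, $k\notin J$, constant on $G$'' as the only way to fail canonicity. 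Completeness and non-redundancy (every emitted bicluster maximal, emitted once) then follow from those same properties of In-Close2 \cite{Andrews2011,Andrews2015}.

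For \textbf{efficiency} the reduction is unhelpful ($\mathbf{B}$ may have up to $nm$ attributes), so I would use the direct count already recorded: each call to ComputeBiclustersFrom costs $O(nm(\log n+m))$ (the sort of line~6 and the canonicity tests, the row-work over the pairwise-disjoint blocks summing to $O(nm)$), and the tree has exactly one node per emitted maximal bicluster, so the total is $O(qnm(\log n+m))$, polynomial per bicluster. The step I expect to be the real obstacle is making the tree correspondence airtight: fixing the attribute order on $\mathbf{B}$, verifying the canonicity tests agree in every case (including the vacuous ``same column, smaller value'' one), and handling the degenerate/empty concepts and the $minRow$/$minCol$ bookkeeping. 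A reduction-free alternative is to adapt Andrews's lexicographic canonicity argument for In-Close2 directly, carrying along the invariant that the current row-set is always a full equivalence class of ``agreement on the current column-set'' --- the fact that makes both the block partition and the row-maximality of every descendant go through.
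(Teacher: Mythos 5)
Your proposal is correct, and for the heart of the proposition it takes a genuinely different route from the paper. The paper's own proof of correctness and efficiency is essentially what you wrote: the same induction over the recursion (splitting the final column-set $H$ into $\{j\}$, the inherited $J$, and the columns gained during closure, with anti-monotonicity handling the inherited part), and the same direct complexity count $O(qnm(\log n + m))$ exploiting that the blocks of Eq.~\ref{eq:rinc_cvcp_compExt} partition $I$. Where you diverge is completeness and non-redundancy: the paper argues directly at the level of the algorithm --- ``same search engine as In-Close2,'' row-set maximal by construction, column-set maximal after closure, and the lexicographic canonicity test of Eq.~\ref{eq:rinc_cvc_p_iscan} preventing repeats --- without ever making the connection to binary FCA explicit. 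You instead formalize exactly that connection via nominal scaling: the binary matrix $\mathbf{B}$ with one attribute per (column, value) pair, the bijection between maximal perfect CVC biclusters of $\mathbf{A}$ and formal concepts of $\mathbf{B}$, and a simulation argument showing RIn-Close\_CVCP on $\mathbf{A}$ traces the same recursion tree as In-Close2 on $\mathbf{B}$, so Andrews's completeness and non-redundancy results transfer. Your route buys rigor --- the paper's ``this difference does not affect the search engine'' becomes a checkable correspondence, and you correctly dispose of the delicate points (the vacuous same-column canonicity failures, inherited attributes being pinned in the scaled intent, the empty blocks when a child resumes within column $j$, preservation of $minRow$/$minCol$) --- at the cost of the bookkeeping you yourself flag as the hard part; the paper's direct argument is shorter and stays on the original data, but is correspondingly less detailed. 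Your observation that the reduction is useless for the efficiency bound (since $\mathbf{B}$ can have $nm$ attributes) and must be replaced by the direct count is also apt, and matches what the paper does.
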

\begin{proof}
RIn-Close\_CVCP is a generalization of In-Close2, so we need only to show that the generalization steps keep these 4 properties:

\noindent (1) Efficiency: As In-Close2, RIn-Close\_CVCP has time complexity linear in the number of biclusters and polynomial in the input size.

\noindent (2) Completeness: RIn-Close\_CVCP has the same search engine as In-Close2, being the biclusters incrementally closed. The main difference here is that, in In-Close2, each bicluster $(I, J)$ can generate just one descendant per attribute, whereas in RIn-Close\_CVCP, each bicluster $(I, J)$ can generate more than one descendant per attribute. But this difference does not affect the search engine. Once the blocks of constant values of a column are identified (see Eq.~\ref{eq:rinc_cvcp_compExt}), the RIn-Close\_CVCP's procedure for each candidate new bicluster is the same as the In-Close2's procedure.

\noindent (3) Correctness: From the search engine of In-Close2 (inherited by RIn-Close\_CVCP), we can observe that a descendant bicluster of a bicluster $(I,J)$ has row-set $G \subset I$, and column-set $H \supset J$ (notice that the descendants inherit the column-set $J$ of its parent). For all $h \in H$, we must check if

\begin{equation*}
\max_{i \in G}(\{a_{ih}\}) - \min_{i \in G}(\{a_{ih}\}) = 0.
\end{equation*}

\noindent Note that we can divide the column-set $H$ of the new bicluster in three disjunct subsets: $\{j\}$, $J$, and $H \setminus \{J \cup \{j\}\}$, where $j$ is the attribute where the bicluster was created, $J$ are the set of inherited columns, and $H \setminus \{J \cup \{j\}\}$ are the set of columns that the bicluster gained during its closure. For the subset $\{j\}$ and the subset $H \setminus \{J \cup \{j\}\}$, this check is ok by construction (see Eq.~\ref{eq:rinc_cvcp_compExt} and line 3 of Algorithm~\ref{alg:rinclosecvcp}, respectively). Supposing that the parent bicluster $(I,J)$ is correct, so the bicluster $(G,J)$ is also correct because $G \subset I$. We can affirm that the parent bicluster $(I,J)$ is correct by induction, since ($i$) a bicluster having the column in which it was generated and the columns it gained during its closure is correct by construction, and ($ii$) the supremum bicluster (ancestor of all biclusters) starts with no columns to pass to its descendants.

\noindent (4) Non-Redundancy: We need to prove that the biclusters are maximal and the same (maximal) bicluster is not enumerated more than once.

As in In-Close2, the row-set of a bicluster is maximal by construction. Also, as in In-Close2, the column-set of a bicluster is maximal after its closure.

As in In-Close2, the canonicity test is responsible for the prevention of mining the same (maximal) bicluster more than once. In both algorithms, given a candidate new bicluster, it is checked if there is a column preceding the column $j$ (column in which the candidate new bicluster was generated) that could be added to the bicluster and it would remain correct. If so, based on the lexicographical order, the candidate new bicluster is ignored.
\end{proof}

\subsection{RIn-Close\_CVC with Symbol Table}
\label{subsec:rinclosecvc}

RIn-Close\_CVC is more elaborate than RIn-Close\_CVCP:  not only a bicluster $(I, J)$ is able to generate multiple descendants per attribute, but also there may be overlaps between their row-sets. For instance, in Figure~\ref{fig:variosFilhos2}, RIn-Close\_CVC is closing the bicluster

\begin{equation*}
	(\{i_1, i_2, i_3, i_4, i_5, i_6, i_7, i_8, i_9, i_{10}, i_{11}, i_{12}, i_{13}\}, \{ j_1, j_2, j_3\}),
\end{equation*}

\noindent and it gives rise to three possible new biclusters in attribute $j$ (we are considering $\epsilon = 1$):

\begin{enumerate}
\item $(\{i_7, i_1, i_8\}, \{ j_1, j_2, j_3, j\})$,
\item $(\{i_1, i_8, i_4, i_5, i_6, i_9, i_{10}, i_{12}\}, \{ j_1, j_2, j_3, j\})$, and
\item $(\{i_4, i_5, i_6, i_9, i_{10}, i_{12}, i_2, i_3, i_{11}, i_{13}\}, \{ j_1, j_2, j_3, j\})$.
\end{enumerate}

\noindent Notice that there is \emph{overlap between their row-sets}. Also, notice that again the elements of the current attribute were sorted to facilitate the identification of all row-sets of the possible new biclusters.

In this scenario, since a bicluster $(I, J)$ is able to generate multiple descendants per attribute, with overlap between their row-sets, it is necessary to take some actions to avoid the generation of redundant biclusters (either enumerating the same bicluster more than once or mining non-maximal biclusters in their row-sets). In fact, these challenging issues can occur if two descendant biclusters share $minRow$ rows or more in their row-sets. For instance, assuming $minRow = 3$, in our example in Figure~\ref{fig:variosFilhos2}, biclusters (d1) and (d2) cannot generate redundant biclusters because they share only 2 rows in their row-sets. But biclusters (d2) and (d3) can generate redundant biclusters with row-set equal to or contained in

\begin{center}
$\{i_4, i_5, i_6, i_9, i_{10}, i_{12}\}.$
\end{center}

\noindent Let us suppose that there is a maximal CVC bicluster with row-set equal to $\{i_4, i_5, i_9, i_{10}\}$. So, we need to prevent this bicluster from being enumerated more than once, being a descendant of (d2) and (d3). Also, let us suppose that there is a maximal CVC bicluster with row-set equal to $\{i_4, i_5, i_9, i_{10}, i_{11}\}$. So, this bicluster must be a descendant of (d3), and we need to avoid mining a non-maximal bicluster with row-set equal to $\{i_4, i_5, i_9, i_{10}\}$ that would be a descendant of (d2).

To solve the first problem, i.e., enumerating the same bicluster more than once, the previous version of RIn-Close\_CVC \cite{VeronezeEtAl2017} tracks the row-sets that have already been generated using efficient symbol table implementations, such as hash tables (HTs) or balanced search trees (BSTs). The symbol table's keys are given by the row-sets, in such way that the rows in a row-set are in their ascending (or descending) order. Notice that it solves this first problem because two distinct CVC biclusters must have two distinct row-sets in order to be maximal. Thus, using a symbol table to track the row-sets, RIn-Close\_CVC \cite{VeronezeEtAl2017} does not mine the same bicluster more than once.

To solve the second problem, i.e., mining non-maximal biclusters in their row-sets, RIn-Close\_CVC \cite{VeronezeEtAl2017} verifies if the bicluster is row-maximal. Considering the Definition~\ref{def:maximal}, we see that it can be done by testing all rows in $X \setminus G$, where $G$ is the row-set of the possible new bicluster. However, it is possible to test a much smaller number of rows. For instance, bicluster (d3) in Figure~\ref{fig:variosFilhos2} should test only the rows corresponding to $i_1$ and $i_8$. We call $\Gamma$ the set of rows that must be checked to verify the row-maximality of the descendants of a bicluster. We will explain how to compute $\Gamma$ in what follows.

Algorithm~\ref{alg:rinclosecvc} presents the pseudocode of RIn-Close\_CVC \cite{VeronezeEtAl2017}. It is invoked with an initial pair $(I, J) = (X, \emptyset)$, an initial attribute $y = 1$, and an empty set $\Gamma = \emptyset$. The symbol table that tracks the biclusters that have already been mined is called $ST$ in the pseudocode. The user-defined parameters minimum number of rows $minRow$ and maximum perturbation $\epsilon$, and the numerical data matrix $\mathbf{A}_{n \times m}$ were omitted of the pseudocode for simplicity and clarity.

\begin{algorithm}
\caption{\textit{RIn-Close\_CVC}\\ComputeBiclustersFrom$((I,J),y,\Gamma)$}
\label{alg:rinclosecvc}
\begin{algorithmic}[1]
  \small
  \REQUIRE Bicluster $(I,J)$ to be closed, current attribute $y$, set of rows to check the row-maximality of the descendants $\Gamma$
  \FOR{$j \leftarrow y$ to $m$}
	  \IF{$j \notin J$}
		  \IF{$\max_{i \in I}(a_{ij}) - \min_{i \in I}(a_{ij}) \leq {\color{black}\epsilon_j}$}
			  \STATE $J \leftarrow J \cup \{j\}$
			\ELSE
			  \STATE Compute the possible new row-sets \COMMENT{Eq.~\ref{eq:rinc_cvc_compExt}}
			  \FOR{each possible new row-set $G$}
					\IF{$|G| \geq minRow$ \AND $G \notin ST$ \AND $G$ is canonical \AND $G$ is row-maximal}
					    \STATE Insert $G^s$ in the symbol table $ST$
						\STATE $\Omega \leftarrow ComputeRM(G, j, \Gamma, I)$ \COMMENT{Algorithm~\ref{alg:ComputeRM}}
						\STATE PutInQueue($G, j, \Omega$)
					\ENDIF					
				\ENDFOR
			\ENDIF
		\ENDIF
	\ENDFOR
    \STATE ProcessBicluster($I,J$)
	\WHILE{GetFromQueue($G, j, \Omega$)}
	  \STATE $H \leftarrow J \cup \{j\}$
		\STATE ComputeBiclustersFrom($(G,H),j+1, \Omega$)
	\ENDWHILE
\end{algorithmic}
\end{algorithm}

Although RIn-Close\_CVC is more elaborate than RIn-Close\_CVCP, the pseudocode has the same structure. So, each bicluster $(I,J)$ is incrementally closed, i.e., its column-set $J$ is completed with all possible columns for the row-set $I$. If the attribute $j$ is not an inherited attribute and it cannot be added to the column-set $J$, the possible new row-sets are computed (line 6). Given that $I$ is the current row-set, and $j$ is the current attribute, the possible new row-sets are given by
\begin{equation}
	\{G \: | \: [G \subseteq I] \; \wedge \; [\max_{i \in G}(\{a_{ij}\}) - \min_{i \in G}(\{a_{ij}\}) \leq \epsilon] \; \wedge \; [G \; \mathrm{is \; maximal}]\}.
	\label{eq:rinc_cvc_compExt}
\end{equation}

\noindent It is easily achieved by \textbf{sorting} the values of the data matrix $\mathbf{A}$ in rows $I$ and column $j$. After ordering, just scroll through the vector looking for the sets of values that meets the user-defined maximum perturbation $\epsilon$.


Letting $J$ be the current column-set, and $j$ be the current attribute, the row-set $G$ of a possible new bicluster is not canonical if

\begin{equation}
    \exists k \in Y \setminus J \: | \: [k < j] \: \wedge \: [\max_{i \in G}(a_{ik}) - \min_{i \in G}(a_{ik}) \leq \epsilon],
	\label{eq:rinc_cvc_iscan}
\end{equation} 

\noindent i.e., if there is an attribute $k < j$ that we can add to the bicluster $(G, J)$ and it remains a correct CVC bicluster. Also, letting $H = J \cup \{j\}$, and $\Gamma$ be the set of rows that must be checked to verify the row-maximality, the candidate new bicluster with row-set $G$ is not row-maximal if there is an object $g \in \Gamma$ that we can add to the bicluster $(G,H)$ and it remains a correct CVC bicluster, i.e.,

\begin{equation}
  \exists g \in \Gamma \: | \: \max_{i \in \{G \cup \{g\}\}}(a_{ik}) - \min_{i \in \{G \cup \{g\}\}}(a_{ik}) \leq \epsilon, \forall k \in H. 
	\label{eq:cvc_ismaximal}
\end{equation}

\begin{algorithm}
\caption{ComputeRM}
\label{alg:ComputeRM}
\begin{algorithmic}[1]
  \small
  \REQUIRE row-set $G$, attribute $j$, set of rows to check the row-maximality $\Gamma$, row-set $I$
  \ENSURE new set of rows to check the row-maximality $\Omega$
  \STATE $p1 \leftarrow minRow$-$th$ smaller element of $\{a_{ij}\}_{i \in G}$ \COMMENT{pivot value 1}
  \STATE $p2 \leftarrow minRow$-$th$ larger element of $\{a_{ij}\}_{i \in G}$ \COMMENT{pivot value 2}
  \STATE $\Omega \leftarrow \Gamma \cup \{i \in I \setminus G \; | \; [p1 - a_{ij} \leq \epsilon] \;  \vee \; [a _{ij} -p2 \leq \epsilon] \}$
\end{algorithmic}
\end{algorithm}

Now, let us explain the function $ComputeRM$ of Algorithm~\ref{alg:ComputeRM}. The pivot elements are the $minRow$-$th$ smaller and larger elements of $\{a_{ij}\}_{i \in G}$. To exemplify, let us suppose that $\{a_{ij}\}_{i \in G} = \{3, 3, 4, 4.5, 5, 6\}$, $minRow = 2$, and $\epsilon = 3$.  So, $p1 = 3$ and $p2 = 5$. Considering the current attribute $j$, rows $i \in I \setminus G$ with values greater than or equal to 0 ($p1 - \epsilon$) or less than or equal to 8 ($p2 + \epsilon$) must comprise $\Omega$. In addition, $\Omega$ inherits set of rows $\Gamma$. Let us explain the motivation behind this inheritance using the example of Figure~\ref{fig:variosFilhos2}: suppose that the bicluster (d2) gives rises to a new bicluster $(G,H)$ with row-set $G=\{i_8, i_4, i_6, i_{10}, i_{12}\}$, and there is a bicluster in the dataset with row-set $\{i_4, i_{10}, i_{12}, i_3\}$. Clearly, the bicluster $(G,H)$ must inherit from its parent the set of rows to check the row-maximality in order to not generate a non-row-maximal bicluster with row-set $\{i_4, i_{10}, i_{12}\}$.

\textbf{Worst-case complexity} - Unlike what happens in the enumeration of perfect CVC biclusters, it could have overlap between the row-sets of the candidate new biclusters. So, if there is 1 possible new row-set, it has at most $n$ rows; if there is 2 possible new row-sets, they have at most $n-1$ rows each; if there is 3 possible new row-sets, they have at most $n-2$ rows each; and so on. Therefore, we have a higher cost in $n$ in this algorithm. The worst-case time of checking if a candidate new bicluster is row-maximal is the same as the canonicity verification: $O(nm)$. The worst-case time of the function \textit{ComputeRM} is $O(n)$. The worst-case time to insert and search in a BST is $O(\log q)$, where $q$ is its number of elements. The worst-case time to insert and search in a HT is $O(1)$ and $O(q)$, respectively. However, under reasonable assumptions, the average time to search in a HT is $O(1)$. Thus, the overall worst-case time of RIn-Close\_CVC is $O(qnm(n \log n + nm + x))$,  where $x$ is the worst-case time of searching in the symbol table. Remark that usually $\log n \ll m$.

\begin{proposition}
RIn-Close\_CVC is an (1) efficient, (2) complete, (3) correct, and (4) non-redundant algorithm for mining all maximal CVC biclusters, with maximum perturbation $\epsilon$, from a numerical data matrix $\mathbf{A}_{n \times m}$.
\end{proposition}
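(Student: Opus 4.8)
The plan is to reuse the structure of the preceding proposition (the one for RIn-Close\_CVCP): RIn-Close\_CVC has the same skeleton as RIn-Close\_CVCP, hence as In-Close2 --- biclusters are incrementally closed, each node keeps a queue of descendants, the recursion is depth-first and is launched from the supremum $(X,\emptyset)$ --- so it suffices to check that the three features by which it differs preserve properties (1)--(4): (i) one attribute $j$ may now spawn several \emph{overlapping} candidate row-sets via Eq.~\ref{eq:rinc_cvc_compExt}; (ii) the global symbol table $ST$; and (iii) the row-maximality filter of Eq.~\ref{eq:cvc_ismaximal}, fed by $\Gamma$ and by $ComputeRM$. Efficiency (1) is immediate from the worst-case bound $O(qnm(n\log n+nm+x))$ proved above: it is linear in the number $q$ of biclusters and polynomial in $n$, $m$ and the lookup cost $x$, so only polynomially bounded work separates two consecutive outputs, i.e.\ the enumeration has polynomial delay. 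Correctness (3) copies the RIn-Close\_CVCP argument: a descendant of $(I,J)$ has row-set $G\subseteq I$ and a column-set $H$ that partitions into $\{j\}$, the inherited part $J$, and the part $H\setminus(J\cup\{j\})$ acquired during closure; on $\{j\}$ and on $H\setminus(J\cup\{j\})$ the CVC condition of Eq.~\ref{eq:cvcbic} holds by construction (Eq.~\ref{eq:rinc_cvc_compExt} and line~3 of Algorithm~\ref{alg:rinclosecvc}, now with ``$\le\epsilon$'' instead of ``$=0$''), and $(G,J)$ is consistent as a submatrix of the inductively-correct parent $(I,J)$ (Anti-Monotonicity), the supremum anchoring the induction.

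The main work is non-redundancy (4), which I would establish in two halves. \textbf{Maximality of every emitted bicluster.} Column-maximality holds as in In-Close2: a processed node $(I,J)$ satisfies $J=\{k\in Y:\max_{i\in I}a_{ik}-\min_{i\in I}a_{ik}\le\epsilon\}$ (the closure of $I$), because the closure loop appends every admissible column above the generating one while canonicity, Eq.~\ref{eq:rinc_cvc_iscan}, forbids admissible columns below it; this is proved by induction starting from $(X,\emptyset)$. Row-maximality holds once the test of Eq.~\ref{eq:cvc_ismaximal} against $\Gamma$ is shown \emph{complete} --- the key lemma --- namely the invariant, proved by induction on the recursion depth, that whenever the recursion is entered with current bicluster $(I,J)$ and check-set $\Gamma$,
\[
\Gamma \;\supseteq\; \bigl\{\, g\in X\setminus I \;:\; \exists\, G\subseteq I,\ |G|\ge minRow,\ (G\cup\{g\},J)\ \text{is correct} \,\bigr\}.
\]
The base case $I=X$ is vacuous. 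In the step, a row $g$ witnessing non-row-maximality of a freshly generated $G\subseteq I$ at column $j$ cannot lie in $I\setminus G$ (as $G$ is a \emph{maximal} $\epsilon$-block of column $j$ inside $I$, any such $g$ already breaks column $j$), so $g\in X\setminus I$; since $|G|\ge minRow$ the pivots obey $\max_{i\in G}a_{ij}\ge p1$ and $\min_{i\in G}a_{ij}\le p2$, so if $a_{gj}$ falls outside $[\min_{i\in G}a_{ij},\,\max_{i\in G}a_{ij}]$ the $ComputeRM$ inequality ``$p1-a_{gj}\le\epsilon$ or $a_{gj}-p2\le\epsilon$'' must hold, and if $a_{gj}$ falls inside it, maximality of the block forces $g\in G$, a contradiction; the inheritance $\Omega\supseteq\Gamma$ carries the remaining (i.e.\ $X\setminus I$) part of the invariant downward. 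Performing this check at line~8 with the not-yet-closed column-set $H=J\cup\{j\}$, rather than with the final closure, is sound because enlarging the column-set can only make row-insertion harder. \textbf{No bicluster is emitted twice.} Distinct maximal CVC biclusters have distinct row-sets (equal row-sets have equal closures), and $ST$, keyed by the sorted row-set, lets each row-set enter the queue at most once, so each row-set is processed at most once.

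Completeness (2) is argued In-Close2-style, and is where I expect the real difficulty. Given a maximal CVC bicluster $(I^{*},J^{*})$ --- necessarily closed, so $J^{*}$ is the closure of $I^{*}$ --- one exhibits a path from $(X,\emptyset)$ that repeatedly descends to the maximal $\epsilon$-block, of the smallest column of $J^{*}$ not yet acquired, that still contains $I^{*}$; along this path the row-set eventually shrinks to exactly $I^{*}$, the canonicity test is passed (the path is canonical), and the row-maximality test of Eq.~\ref{eq:cvc_ismaximal} is passed because $(I^{*},J^{*})$, being maximal, is row-maximal and that test is sound (any $g\in\Gamma\subseteq X$ it exhibits is a genuine addable row). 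The obstacle is that the two pruning devices --- not queuing a node that fails the line-8 row-maximality test, and skipping a row-set already in $ST$ --- must not delete any maximal bicluster from such a path; equivalently, one must show that every maximal CVC bicluster admits a path all of whose internal nodes both pass the line-8 test and are the first occurrence of their row-set. The mechanism that should make this true is, once more, the closure operator: when an internal node with row-set $I'$ fails the test, the witnessing extra row $g$ yields a row-set $I'\cup\{g\}$ that lies on an alternative, equally-descending branch still reaching $I^{*}$ --- and it is precisely the overlapping candidate row-sets of feature (i) that make such reroutings possible --- while $ST$ only forbids re-exploring a subtree that the first (retained) occurrence, which always closes to the same bicluster on row-set $I'$ regardless of branch, has already covered. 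Turning these informal reroutings into a rigorous argument, in particular tracking the generating columns so that no needed descendant is dropped, is the adaptation of the CbO/In-Close2 completeness proof to numerical data, and is the step I expect to cost the most effort.
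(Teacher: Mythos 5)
Your proposal is correct and follows essentially the same route as the paper: it reduces the claim to the features that distinguish RIn-Close\_CVC from RIn-Close\_CVCP, establishes row-maximality through the pivot-plus-inheritance construction of $\Gamma$ (Algorithm~\ref{alg:ComputeRM}), rules out duplicate enumeration via the symbol table $ST$ together with the uniqueness of the row-set of a maximal CVC bicluster, and argues efficiency, correctness and completeness by appeal to the worst-case bound and the shared In-Close2/RIn-Close\_CVCP search engine. You are in fact more explicit than the paper in the two places it handles by assertion alone: the inductive invariant guaranteeing that $\Gamma$ contains every row that could witness non-row-maximality (the paper only states informally that such rows are ``kept tracked''), and the observation that discarding a candidate at line~8 or via $ST$ also discards its subtree, so completeness requires the rerouting/first-occurrence argument you sketch rather than the paper's one-line claim that these pruning devices do not interfere with completeness.
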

\begin{proof}
RIn-Close\_CVC generalizes RIn-Close\_CVCP, so it is sufficient to show that the generalization steps keep these 4 properties:

\noindent (1) Efficiency: RIn-Close\_CVC has also at most polynomial time in the input size. If it is using a BST, it has time complexity quasi-linear, more specifically, linearithmic in the number of biclusters. If it is using a HT, its worst-case time is quadratic in the number of biclusters, however, under reasonable assumptions, it will have linear time in the number of biclusters.

\noindent (2) Completeness: RIn-Close\_CVC also has the same search engine as In-Close2. So, the explanation is basically the same as that of RIn-Close\_CVCP. The main difference is the fact that the row-sets of the candidate new biclusters are computed according to Eq.~\ref{eq:rinc_cvc_compExt}, since now we are looking for perturbed CVC biclusters.

The row-maximal checking will only discard non-maximal biclusters in their row-sets. Also, since each maximal CVC bicluster has a unique row-set, the usage of the symbol table $ST$ will only avoid the mining of the same maximal bicluster more than once. So, these features of RIn-Close\_CVC do not interfere in the completeness of the algorithm.

\noindent (3) Correctness: The explanation is also the same as that of RIn-Close\_CVCP, except by the fact that now we are looking for perturbed CVC biclusters.

\noindent (4) Non-Redundancy:
As in In-Close2, the column-set of a bicluster is maximal after its closure. However, we must check if a candidate new bicluster with row-set $G$ is row-maximal. We could do this by testing all rows in $X \setminus G$, but it is sufficient to test the rows in the set $\Gamma$. Why? The rows in $\Gamma$ are computed taking into account that each bicluster must have at least $minRow$ rows, and all its columns must have at most perturbation $\epsilon$. Thus, the attribute $j$ of the new bicluster, where it was generated, is used as reference to compute a set of rows that must belong to the set $\Gamma$: based on the $minRow$-$th$ smaller and $minRow$-$th$ larger values of the column $j$ of the new bicluster, rows that could generated a correct CVC bicluster are added to $\Gamma$. Furthermore, the set $\Gamma$ of a new bicluster inherits the set of rows to check the row-maximality from all its ancestors. With this inheritance, all rows that could possibly generate a correct CVC bicluster are kept tracked. 

In addition to the inherited canonicity test of In-Close2, RIn-Close\_CVC also uses a symbol table for the prevention of mining the same (maximal) bicluster more than once. The keys of the symbol table are the ordered row-sets of the mined biclusters. Thus, as each maximal CVC bicluster has a unique row-set, RIn-Close\_CVC will not enumerate the same bicluster more than once.
\end{proof}

\section{New RIn-Close\_CVC}
\label{sec:rinclosecvc2}

We saw in Subsection~\ref{subsec:rinclosecvc} that during the closure of a CVC bicluster $(I,J)$, it can generate multiple descendants per attribute with overlap between their row-sets. We also saw that without taking some extra care in this scenario, an enumerative algorithm based on the search engine of In-Close2 could return redundant biclusters (either enumerating the same bicluster more than once or mining non-maximal biclusters in their row-sets). For ease, we will call this problem as \textit{row-sets overlap problem}.


The difference between the previous version of RIn-Close\_CVC \cite{VeronezeEtAl2017} and this new version, RIn-Close\_CVC2, is basically in the actions to deal with the \textit{row-sets overlap problem}.

Algorithm~\ref{alg:rinclosecvc2} shows the pseudocode of RIn-Close\_CVC2. Note that, unlike RIn-Close\_CVC \cite{VeronezeEtAl2017}, it does not have a symbol table $ST$ and neither the checking of row-maximality. Instead, it has a checking that we call \textit{row-canonical} to deal with the \textit{row-sets overlap problem}.

\begin{algorithm}
\caption{\textit{RIn-Close\_CVC2}\\ComputeBiclustersFrom$((I,J),y,\Gamma)$}
\label{alg:rinclosecvc2}
\begin{algorithmic}[1]
  \small
  \REQUIRE Bicluster $(I,J)$ to be closed, current attribute $y$, set of rows to check the row-canonicity of the descendants $\Gamma$
  \FOR{$j \leftarrow y$ to $m$}
	  \IF{$j \notin J$}
		  \IF{$\max_{i \in I}(a_{ij}) - \min_{i \in I}(a_{ij}) \leq {\color{black}\epsilon_j}$}
			  \STATE $J \leftarrow J \cup \{j\}$
			\ELSE
			  \STATE Compute the possible new row-sets \COMMENT{Eq.~\ref{eq:rinc_cvc_compExt}}
			  \FOR{each possible new row-set $G$}
					\IF{$|G| \geq minRow$ \AND $G$ is canonical \AND $G$ is row-canonical}
						\STATE $\Omega \leftarrow ComputeRM(G, j, \Gamma, I)$ \COMMENT{Algorithm~\ref{alg:ComputeRM}}
						\STATE PutInQueue($G, j, \Omega$)
					\ENDIF					
				\ENDFOR
			\ENDIF
		\ENDIF
	\ENDFOR
    \STATE ProcessBicluster($I,J$)
	\WHILE{GetFromQueue($G, j, \Omega$)}
	  \STATE $H \leftarrow J \cup \{j\}$
		\STATE ComputeBiclustersFrom($(G,H),j+1, \Omega$)
	\ENDWHILE
\end{algorithmic}
\end{algorithm}

Letting $J$ be the current column-set, $j$ the current attribute, $J^{<j}$ the set of all attributes of $J$ up to $j$, $H = J^{<j} \cup \{j\}$, and $\Gamma$ the set of rows that must be checked to verify the row-canonicity, the candidate new bicluster with row-set $G$ is not row-canonical if (1) there is an object $g \in \Gamma$ that we can add to the bicluster $(G,H)$ so that it remains a correct CVC bicluster, i.e.,

\begin{equation}
  \exists g \in \Gamma \: | \: \max_{i \in \{G \cup \{g\}\}}(a_{ik}) - \min_{i \in \{G \cup \{g\}\}}(a_{ik}) \leq \epsilon, \forall k \in H,
	\label{eq:cvc_row-iscan1}
\end{equation}

\noindent or (2) there is another bicluster with minor lexicographic order in the rows that could be the parent of the candidate new bicluster, i.e.,

\begin{equation}
  \exists g \in \Gamma \: | \: \max_{i \in \{I^{<g} \cup \{g\} \cup G\}}(a_{ik}) - \min_{i \in \{I^{<g} \cup \{g\} \cup G\}}(a_{ik}) \leq \epsilon, \forall k \in J^{<j},
	\label{eq:cvc_row-iscan2}
\end{equation}

\noindent where $I^{<g}$ is the set of all objects of $I$ up to $g$ ($g \notin I$ by definition).

The idea behind the function row-canonical is as follows. If a bicluster can be created from more than one bicluster on different columns, it must be created in the most posterior column. If a bicluster can be created from more than one bicluster in the same column $j$, it must be created by the bicluster with the least lexicographic order in its row-set. Besides, the first part of the function row-canonical also ensures that a new bicluster is row-maximal.

The set of rows that must be checked to verify the row-canonicity $\Gamma$ is also computed by Algorithm~\ref{alg:ComputeRM}, since all rows that could possibly generate a correct CVC bicluster, due to the \textit{row-sets overlap problem}, are kept tracked by means of it.

\textbf{Worst-case complexity} - The worst-case time of checking if a candidate new bicluster is row-canonical is $O(n^2m)$. Thus, the overall worst-case time of RIn-Close\_CVC2 is $O(qn^3m^2)$.

\begin{proposition}
RIn-Close\_CVC2 is an (1) efficient, (2) complete, (3) correct, and (4) non-redundant algorithm for mining all maximal CVC biclusters, with maximum perturbation $\epsilon$, from a numerical data matrix $\mathbf{A}_{n \times m}$.
\end{proposition}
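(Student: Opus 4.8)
The plan is to follow the same template used for the two preceding propositions: since RIn-Close\_CVC2 shares the search engine of In-Close2 (hence of RIn-Close\_CVCP and RIn-Close\_CVC), and differs from RIn-Close\_CVC only in that the symbol table $ST$ together with the row-maximality test are replaced by the single \emph{row-canonical} test of Eqs.~\ref{eq:cvc_row-iscan1} and~\ref{eq:cvc_row-iscan2}, it suffices to re-establish the four properties under this modified pruning rule, taking everything else as inherited.

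\textbf{Efficiency and Correctness.} For efficiency I would simply invoke the worst-case analysis already stated: the row-canonical test costs $O(n^2m)$, so the overall cost is $O(qn^3m^2)$, which is polynomial in the input size and linear in the number $q$ of biclusters, giving polynomial time per bicluster. Correctness is inherited verbatim from RIn-Close\_CVC: a descendant of $(I,J)$ has row-set $G \subset I$ and column-set $H \supseteq J \cup \{j\}$, and every column placed in $H$ — the generating column $j$, the inherited columns of $J$, and the columns gained during closure — satisfies the CVC perturbation bound by construction (Eq.~\ref{eq:rinc_cvc_compExt} and line~3 of Algorithm~\ref{alg:rinclosecvc2}), so a straightforward induction from the \emph{supremum} bicluster $(X,\emptyset)$ shows every processed bicluster is a correct CVC bicluster.

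\textbf{Completeness and Non-redundancy.} This is the core of the argument and the only place where the new rule genuinely needs justification. For a fixed maximal CVC bicluster $(I^\ast,J^\ast)$ I would identify its unique \emph{canonical generating pair}: among the columns $j \in J^\ast$ at which $I^\ast$ appears as one of the maximal row-subsets produced by Eq.~\ref{eq:rinc_cvc_compExt}, column-canonicity (Eq.~\ref{eq:rinc_cvc_iscan}) forces $j$ to be the \emph{last} such column; and among all parent biclusters that can spawn $I^\ast$ at that column $j$, clause~(2) of row-canonical (Eq.~\ref{eq:cvc_row-iscan2}) forces the parent to be the one whose row-set is smallest in lexicographic order. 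I would then argue (i) that such a pair is always reached along the search path descending from $(X,\emptyset)$, because $\Gamma$ is maintained by \textit{ComputeRM} so as to contain every row whose inclusion could witness a failure of either clause, whence no maximal bicluster is discarded (completeness); and (ii) that the pair is unique, so $(I^\ast,J^\ast)$ is enumerated exactly once (no duplicates). Row-maximality of $G$ itself follows from clause~(1) of row-canonical (Eq.~\ref{eq:cvc_row-iscan1}), which is literally the old row-maximality test of Eq.~\ref{eq:cvc_ismaximal}, while column-maximality holds after closure exactly as in In-Close2; together these give the maximality half of non-redundancy.

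\textbf{Main obstacle.} The delicate step is proving the \emph{soundness} of discarding a non-row-canonical $G$ — that whenever clause~(2) fires for $(G,H)$ there is genuinely another search-tree node that successfully produces the same final bicluster — and, dually, that $\Gamma$ is always large enough for the test to detect every such situation despite being a small subset of $X \setminus G$. This hinges on a loop invariant for $\Gamma$ propagated through the recursion by \textit{ComputeRM}: every row $g$ whose addition would keep some ancestor-compatible sub-bicluster a correct CVC bicluster must already lie in $\Gamma$ (the pivot computation with $p1,p2$ in Algorithm~\ref{alg:ComputeRM} plus the inheritance of the parent's $\Gamma$ is designed precisely for this). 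Establishing that invariant, and then matching the two clauses of the row-canonical test to the two failure modes of the \emph{row-sets overlap problem} — the same bicluster reachable via two parents in the same column versus via parents in different columns — is where the real work lies; the remaining bookkeeping is routine given the earlier propositions.
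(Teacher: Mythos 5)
Your overall plan follows the paper's own route: reduce everything to the treatment of the \textit{row-sets overlap problem}, inherit efficiency and correctness from the earlier propositions, let clause (2) of the row-canonical test select the lexicographically smallest parent within a column, and rest the soundness of the pruning on an invariant for $\Gamma$ maintained by \textit{ComputeRM} --- which you rightly identify as the place where the real work lies (the paper itself only asserts that $\Gamma$ tracks all relevant rows). So the decomposition and the identification of the crux are fine.

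There is, however, a genuine error in how you assign the mechanisms, and it leaves a hole in non-redundancy. You claim that column-canonicity (Eq.~\ref{eq:rinc_cvc_iscan}) forces the birth column of $(I^\ast,J^\ast)$ to be the \emph{last} column of $J^\ast$ at which $I^\ast$ can arise. It does not: that test only inspects columns $k<j$ outside the parent's current column-set, so the same row-set $I^\ast$ can be born at two different columns from two different parents with both births passing canonicity --- this is exactly the failure mode that forced RIn-Close\_CVC to keep a symbol table despite already having the canonicity test. In the paper, cross-column duplicates are eliminated by the \emph{first} clause of the row-canonical test (Eq.~\ref{eq:cvc_row-iscan1}), and precisely because it uses $J^{<j}\cup\{j\}$ rather than the full $J\cup\{j\}$ of the old row-maximality test (Eq.~\ref{eq:cvc_ismaximal}): if $I^\ast$ could also be born at a later column $j'>j$ from a parent with row-set $I'\supset I^\ast$, then any $g\in I'\setminus I^\ast$ keeps $(I^\ast\cup\{g\},\,J^{<j}\cup\{j\})$ correct, so (given $g\in\Gamma$) the earlier birth is discarded even though $I^\ast$ is row-maximal with respect to the full $J^\ast$. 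Hence your statement that clause (1) is ``literally the old row-maximality test'' is not accurate, and under your assignment nothing in the plan kills same-row-set births at different columns, so the claimed uniqueness of your ``canonical generating pair'' does not follow. The fix is to attribute the posterior-column selection to clause (1) (exploiting the restriction to $J^{<j}$), keep clause (2) for ties within a column, and observe that genuine row-maximality is still caught by clause (1) since correctness of $(G\cup\{g\},J^\ast)$ implies correctness of $(G\cup\{g\},J^{<j}\cup\{j\})$; with that repair, the rest of your plan, including the $\Gamma$ loop invariant, lines up with (and is more explicit than) the paper's argument.
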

\begin{proof}
The difference between RIn-Close\_CVC and RIn-Close\_CVC2 is only in the way of dealing with the \textit{row-sets overlap problem}. The new way of dealing with this issue is the row-canonicity checking (see Eqs.~\ref{eq:cvc_row-iscan1} and \ref{eq:cvc_row-iscan2}). So, we only need to show that this alternative step keeps these 4 properties.

\noindent (1) Efficiency: RIn-Close\_CVC2 has time complexity linear in the number of biclusters and polynomial in the input size.

\noindent (2) Completeness: The row-canonicity will not lose a (maximal) bicluster because a new candidate bicluster will only be discarded if (1) there is a bicluster $(G \cup \{g\}, J^{<j} \cup \{j\})$, which may create it in a column greater than $j$ ($g \in \Gamma$), or (2) there is another bicluster with minor lexicographic order in its row-set, which may create it in the column $j$.

\noindent (3) Correctness: Idem RIn-Close\_CVC.

\noindent (4) Non-Redundancy:
We have already shown that the set $\Gamma$ contains all rows that could possibly generate a correct CVC bicluster due to the \textit{row-sets overlap problem}.

The first part of the row-canonicity test (see Eq.~\ref{eq:cvc_row-iscan1}) ensures that a candidate new bicluster that is not row-maximal will be discarded.

In the case of mining the same bicluster more than once, we can have two situations: a bicluster can be created by more than one bicluster (1) on different columns, and/or (2) in a same column $j$. Both situations are covered by the new row-canonicity test. Its first part ensures that a new bicluster will be created in the most posterior column by checking if there exists a correct CVC bicluster $(G \cup \{g\}, J^{<j} \cup \{j\})$. The second part ensures that a new bicluster will only be created in a column $j$ by the bicluster with the minor lexicographical order in its row-set. Therefore, the test fails if there is a correct CVC bicluster with row-set $\{I^{<g} \cup \{g\} \cup G\} < I$ and column-set $J^{<j}$ (remark that $G \subset I$ and if $I^{<g}  = I$, then the bicluster $(I^{<g} \cup \{g\} \cup G, J^{<j})$ is incorrect by definition since $I$ is a maximal row-set).
\end{proof}

\section{Experimental Results}
\label{sec:expresults}

We evaluated the new version of RIn-Close\_CVC, denoted RIn-Close\_CVC2, on both synthetic and real datasets. Our goal is to outline the advantages of this new version when compared to its previous version. The experiments were carried out on a PC Intel(R) Core(TM) i7-4770K CPU @ 3.5 GHz, 32GB of RAM, and running under Ubuntu 14.04. Our code is available at \url{https://sourceforge.net/projects/rinclose/}.

\subsection{Artificial Data}

This experiment aims to test RIn-Close\_CVC2's performance when varying ($i$) the number $n$ of rows of the dataset, ($ii$) the number $m$ of columns of the dataset, ($iii$) the number of biclusters in the dataset, ($iv$) the bicluster row size, ($v$) the bicluster column size, and ($vi$) the overlap among the biclusters. For this purpose, we created synthetic datasets with controlled number, size, shape and level of noise of the existing biclusters and, then, we tested how RIn-Close\_CVC2 performs when varying each one of the parameters in isolation.

The default parameters used in the synthetic data generator were: $n = 10,000$, $m = 100$, number of biclusters $= 30$, bicluster row size $= 200$, bicluster column size $= 16$, overlap $= 0.2$, and Gaussian noise with $\mu = 0$ and $\sigma = 0.05$. The synthetic data generator creates the biclusters and assigns random values to the other regions of the dataset. Then, it adds Gaussian noise and shuffles the rows and columns of the dataset. Therefore, the generator creates arbitrarily positioned overlapping biclusters, so that the resulting biclusters are usually non-contiguous. The amount of noise was chosen in such a way that the original biclusters were preserved. The synthetic data generator returns a dataset and the maximum perturbation $\epsilon$ in its biclusters.

For each configuration, we created 50 different synthetic datasets to compute the median runtimes and memory usage. We chose the median, not the mean, because the median is less sensitive to outliers. For the same configuration, the results may vary greatly because the positioning of the biclusters is random. From FCA, we know that the placement of the biclusters in a dataset makes all difference in the runtime because the less canonicity test failures, the faster the execution \cite{CarpinetoEtAl2004}.

Notice that RIn-Close\_CVC and RIn-Close\_CVC2 produce exactly the same biclustering solution, finding all the planted biclusters without loosing any row or column.

Figures~\ref{fig:expSynDataRT} and \ref{fig:expSynDataMEM} shows the runtime and the memory usage of RIn-Close\_CVCP, RIn-Close\_CVC, and RIn-Close\_CVC2 for the different configurations of this experiment. We included RIn-Close\_CVCP as a baseline for comparison. Since RIn-Close\_CVCP looks for perfect biclusters, it was applied to datasets without the Gaussian noise.

For the runtime, RIn-Close\_CVC and RIn-Close\_CVC2 showed the same behavior along the variation of the synthetic data generator parameters. However, although RIn-Close\_CVC2 has a higher worst-case time-complexity than RIn-Close\_CVC, it obtained a better median runtime than RIn-Close\_CVC in all scenarios.

The memory usage of RIn-Close\_CVC2 was equivalent to the memory usage of RIn-Close\_CVCP, being much better than the one of RIn-Close\_CVC.

\begin{figure*}
\centering
\subfigure[]{
  \includegraphics[trim=0.2cm 0.1cm 0.6cm 0.4cm, clip, scale=0.3]{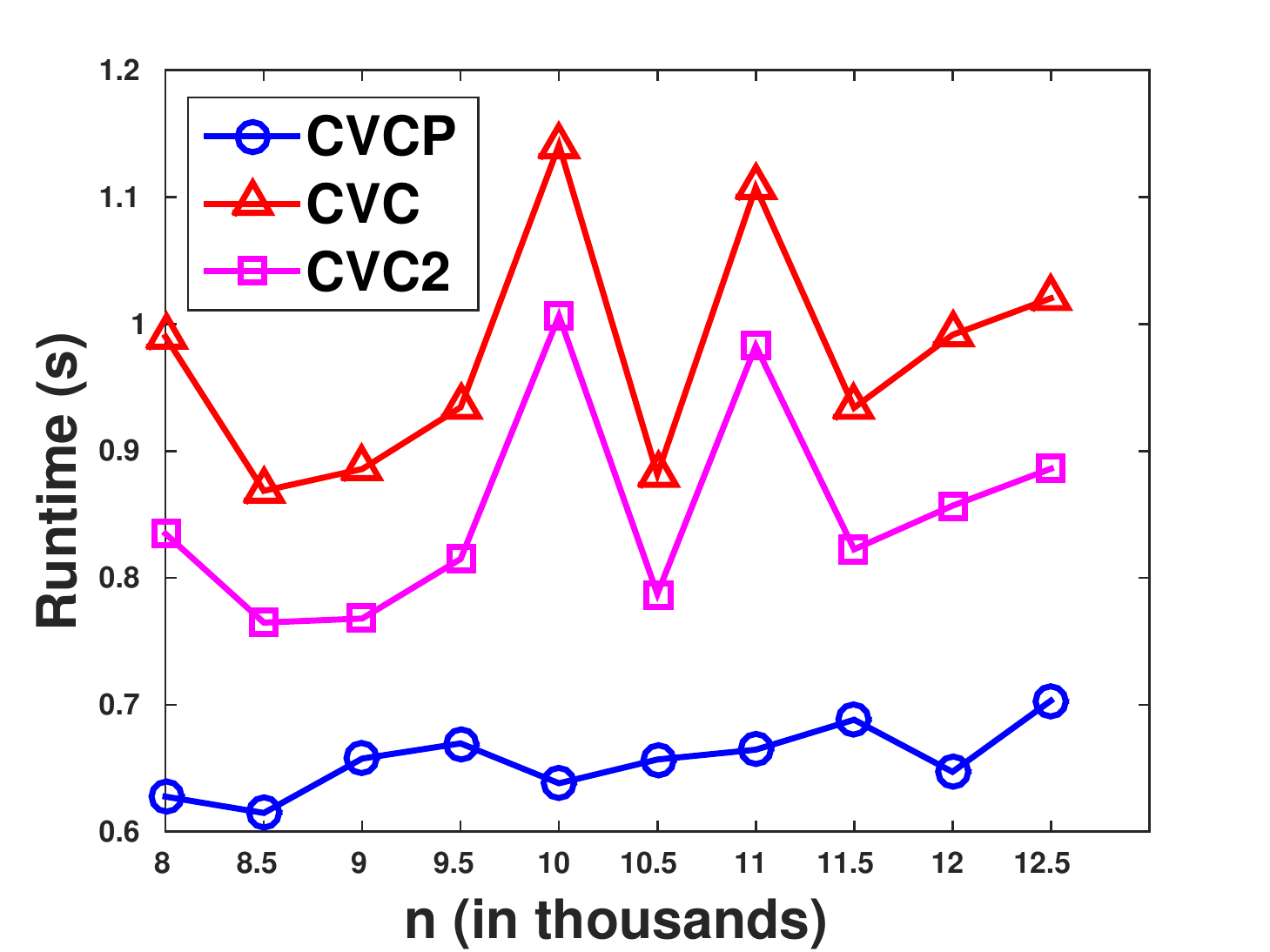}
}
\subfigure[]{
  \includegraphics[trim=0.2cm 0.1cm 0.6cm 0.4cm, clip, scale=0.3]{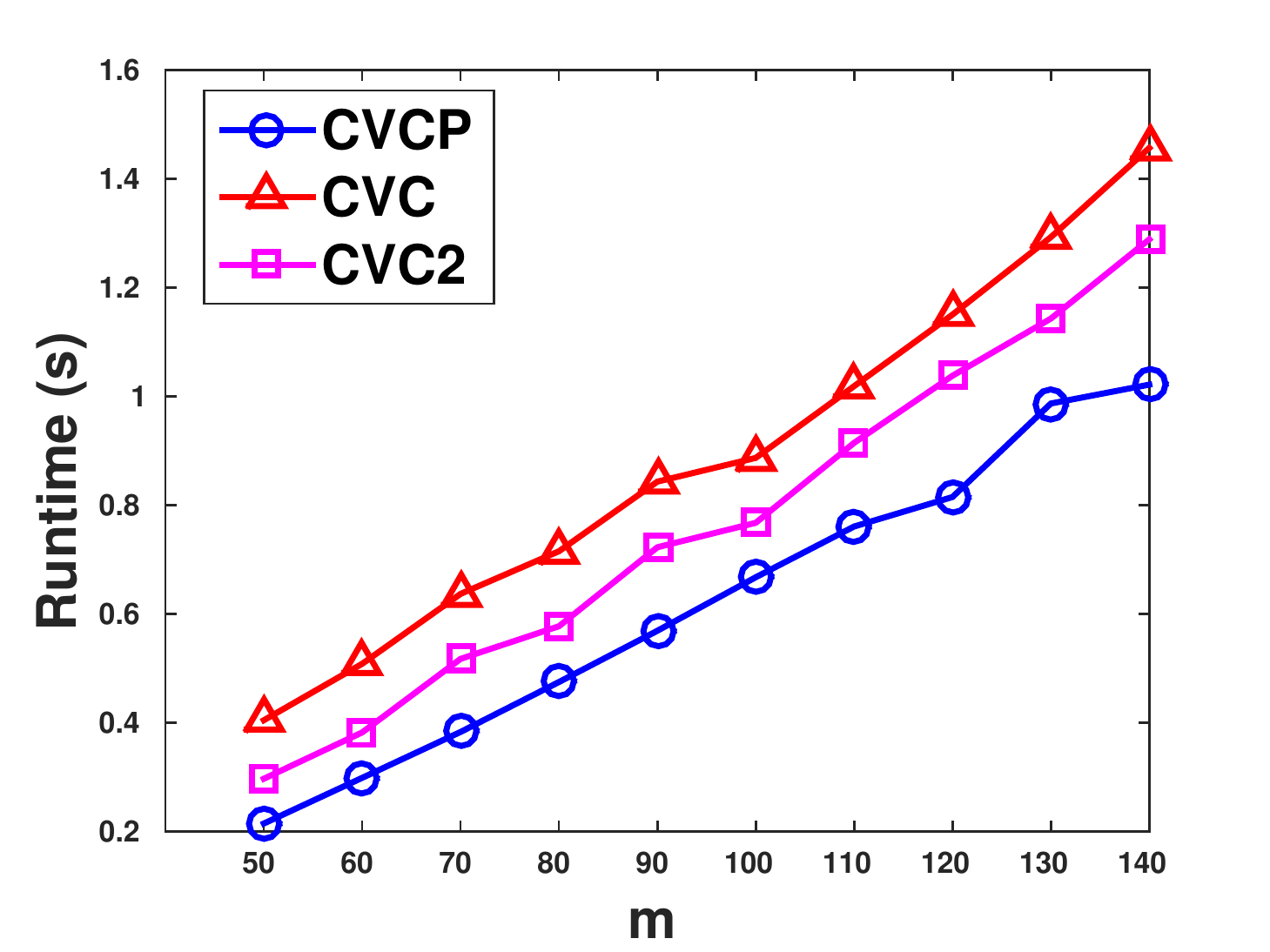}
}
\subfigure[]{
  \includegraphics[trim=0.2cm 0.1cm 0.6cm 0.4cm, clip, scale=0.3]{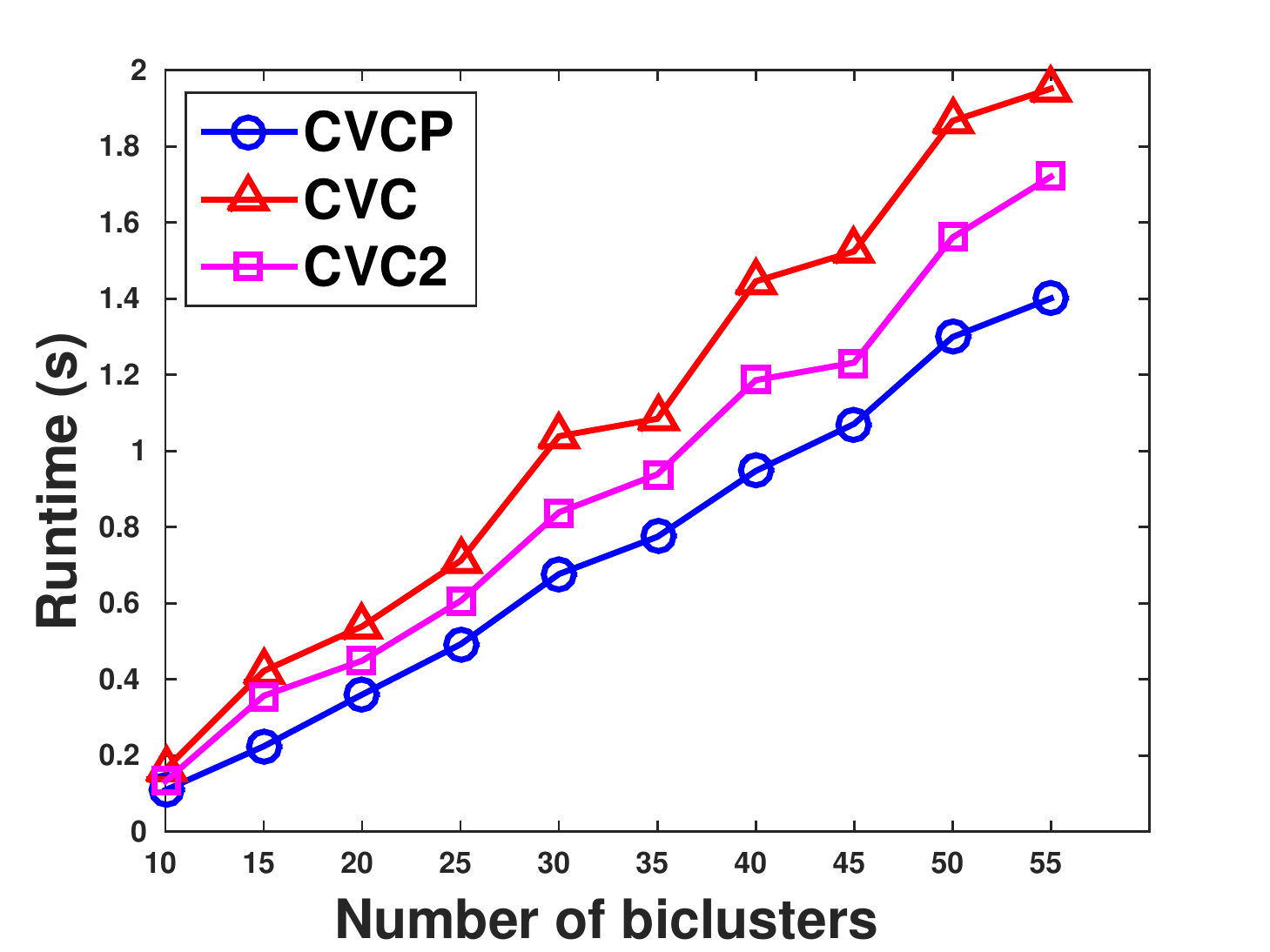}
}
\subfigure[]{
  \includegraphics[trim=0.2cm 0.1cm 0.6cm 0.4cm, clip, scale=0.3]{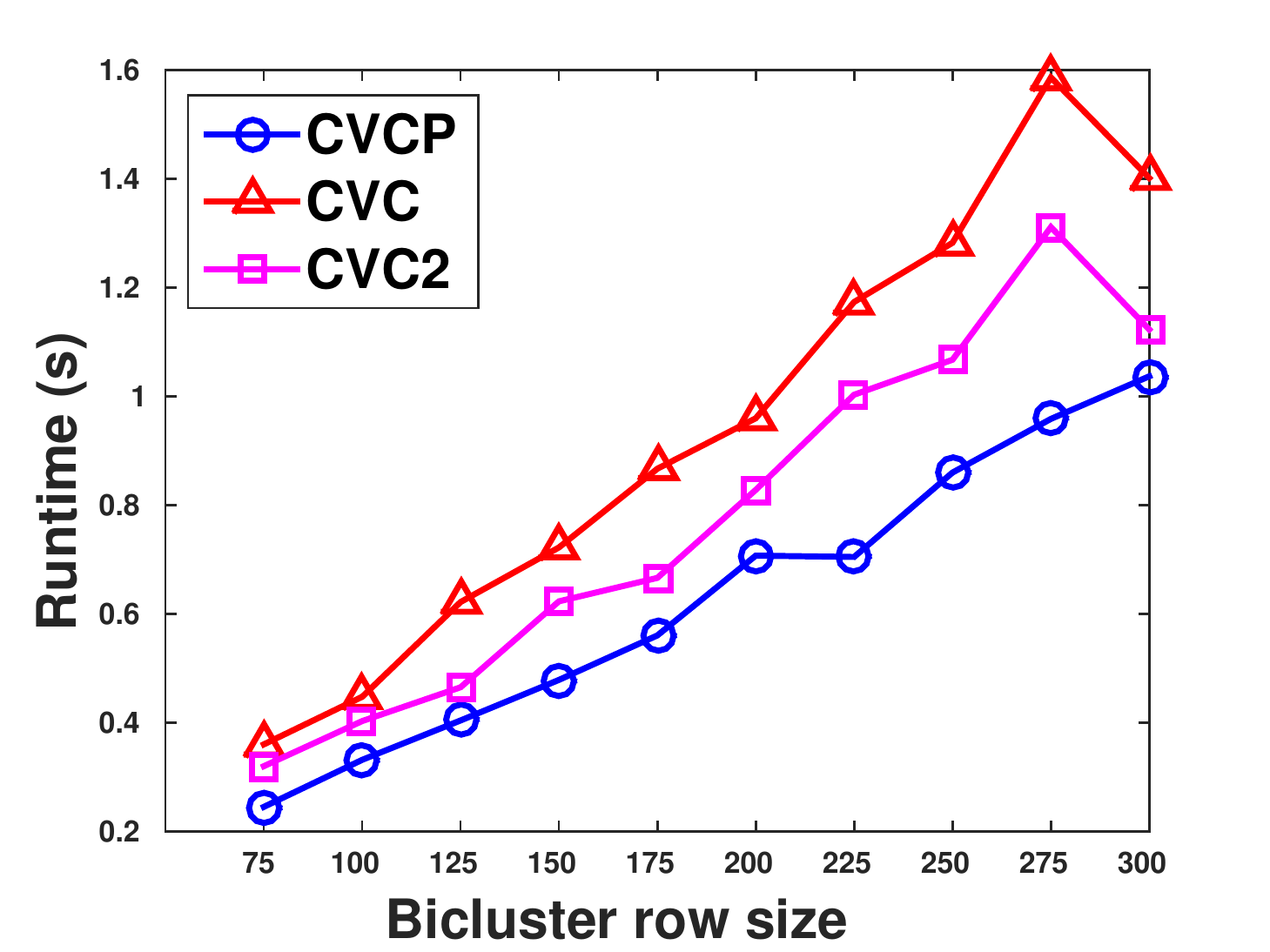}
}
\subfigure[]{
  \includegraphics[trim=0.2cm 0.1cm 0.6cm 0.4cm, clip, scale=0.3]{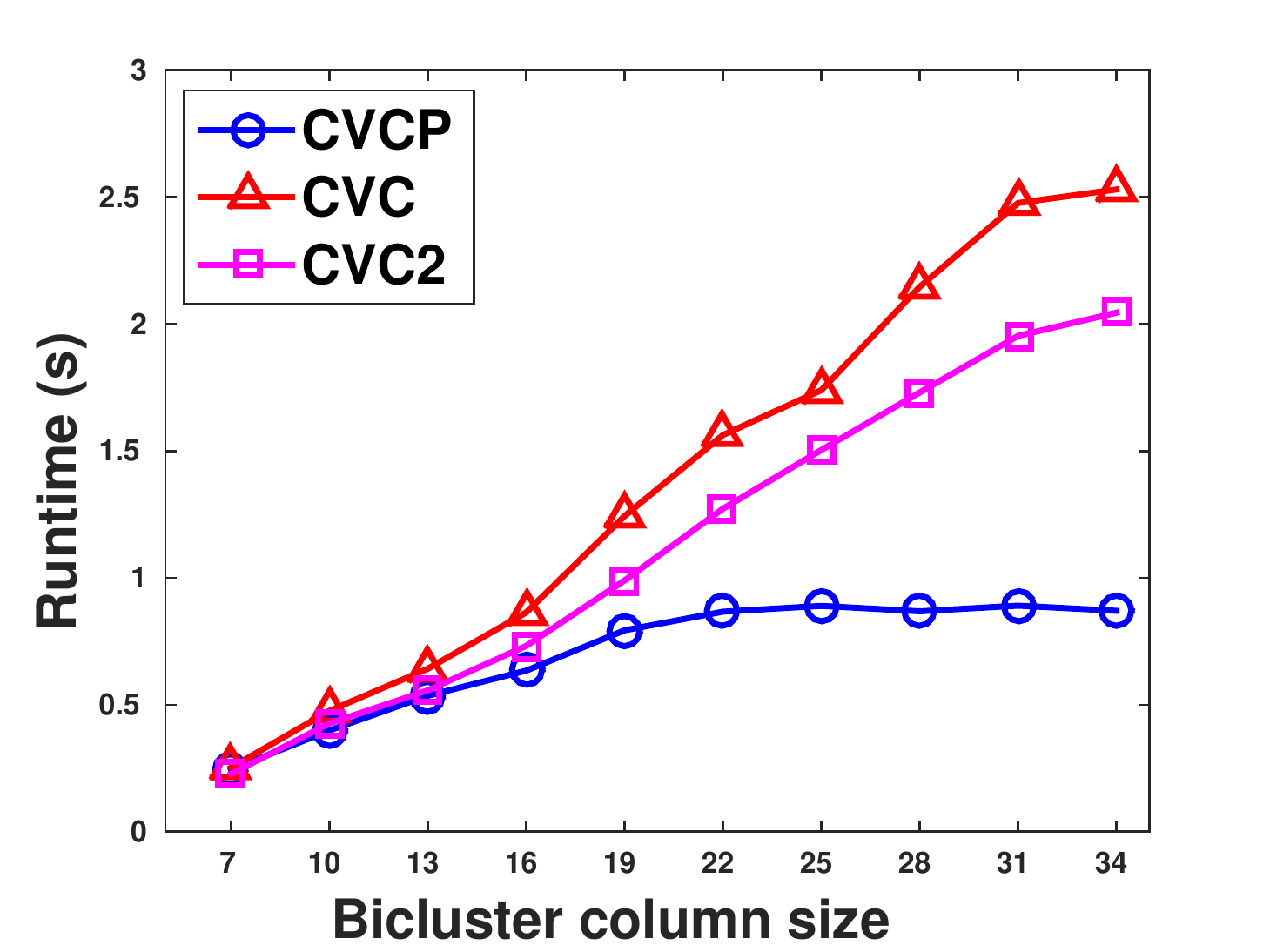}
}
\subfigure[]{
  \includegraphics[trim=0.2cm 0.1cm 0.6cm 0.4cm, clip, scale=0.3]{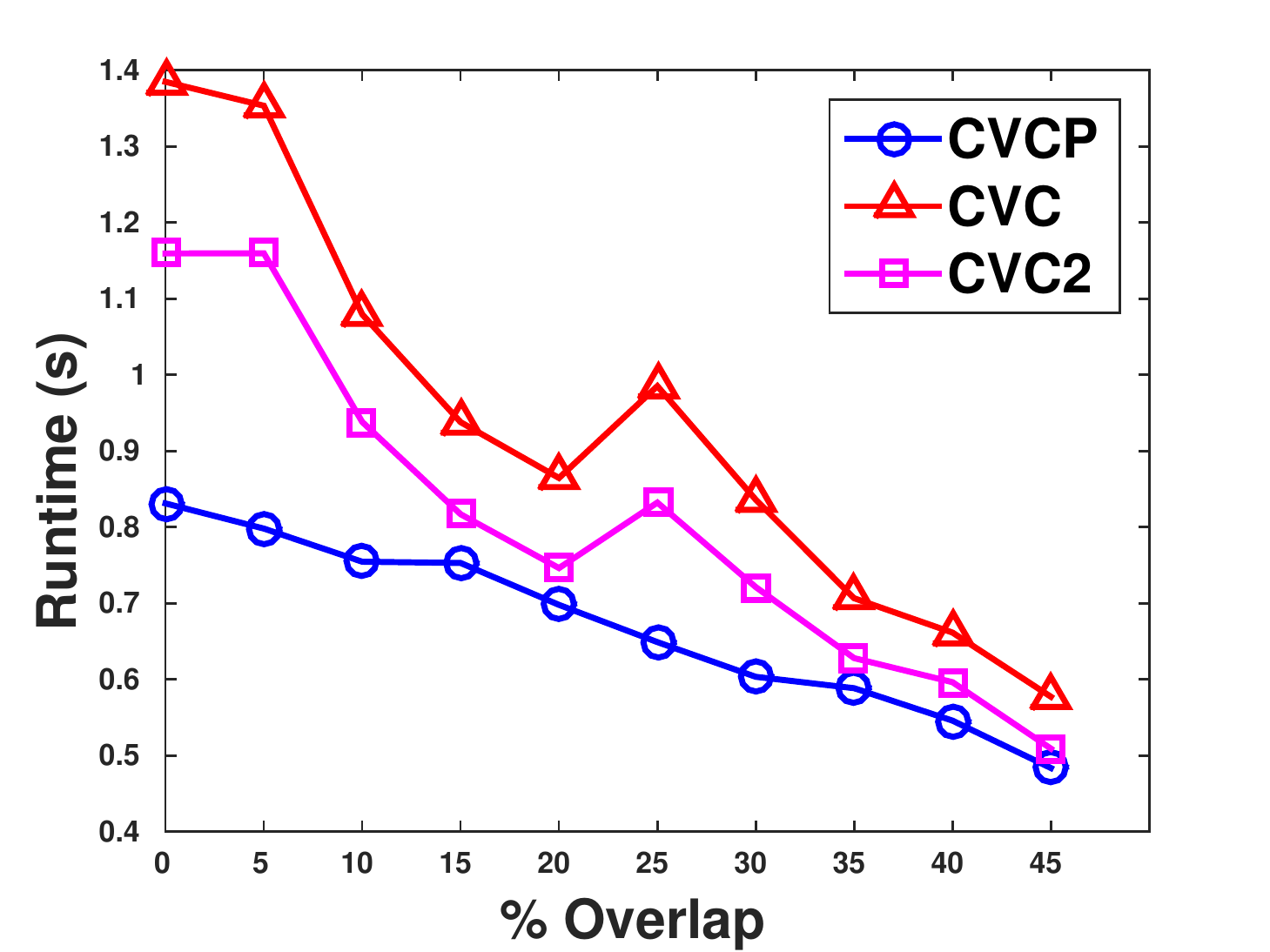}
}
\caption{Runtime of RIn-Close\_CVCP, RIn-Close\_CVC, and RIn-Close\_CVC2 when varying (a) the number $n$ of rows of the dataset, (b) the number $m$ of columns of the dataset, (c) the number of biclusters in the dataset, (d) the bicluster row size, (e) the bicluster column size, and (f) the overlap among the biclusters.}
\label{fig:expSynDataRT}
\end{figure*}

\begin{figure*}
\centering
\subfigure[]{
  \includegraphics[trim=0.2cm 0.1cm 0.6cm 0.4cm, clip, scale=0.3]{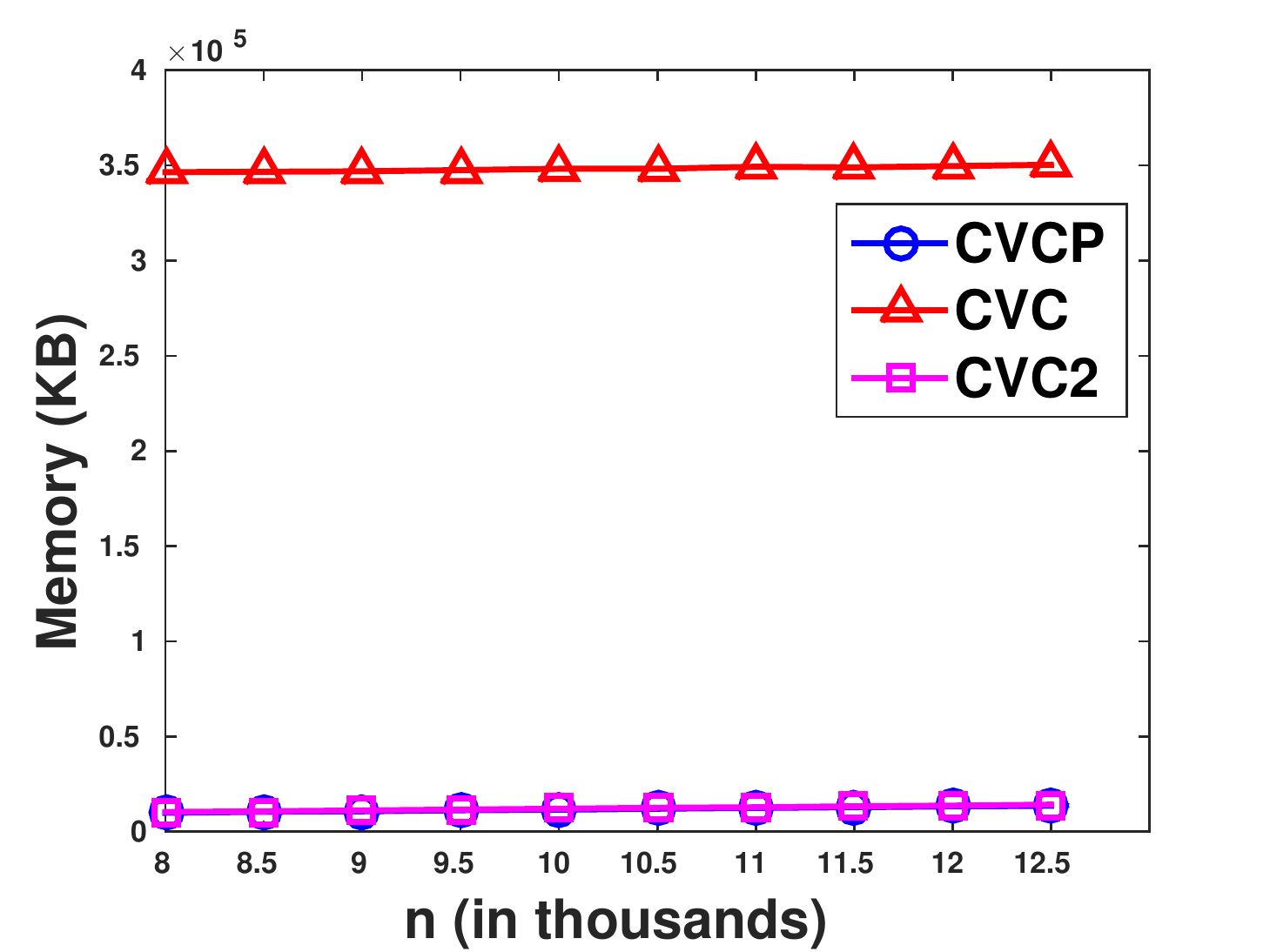}
}
\subfigure[]{
  \includegraphics[trim=0.2cm 0.1cm 0.6cm 0.4cm, clip, scale=0.3]{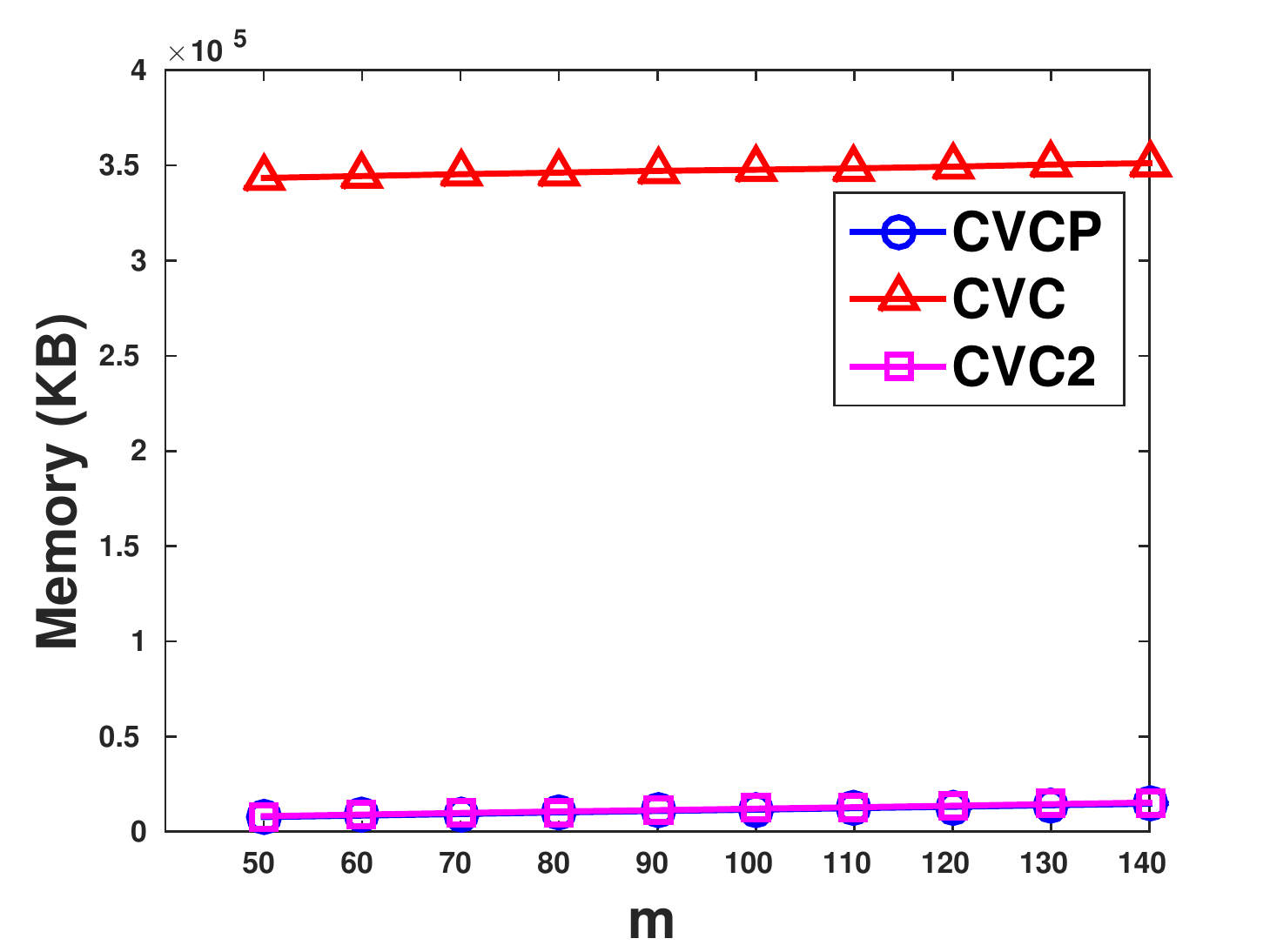}
}
\subfigure[]{
  \includegraphics[trim=0.2cm 0.1cm 0.6cm 0.4cm, clip, scale=0.3]{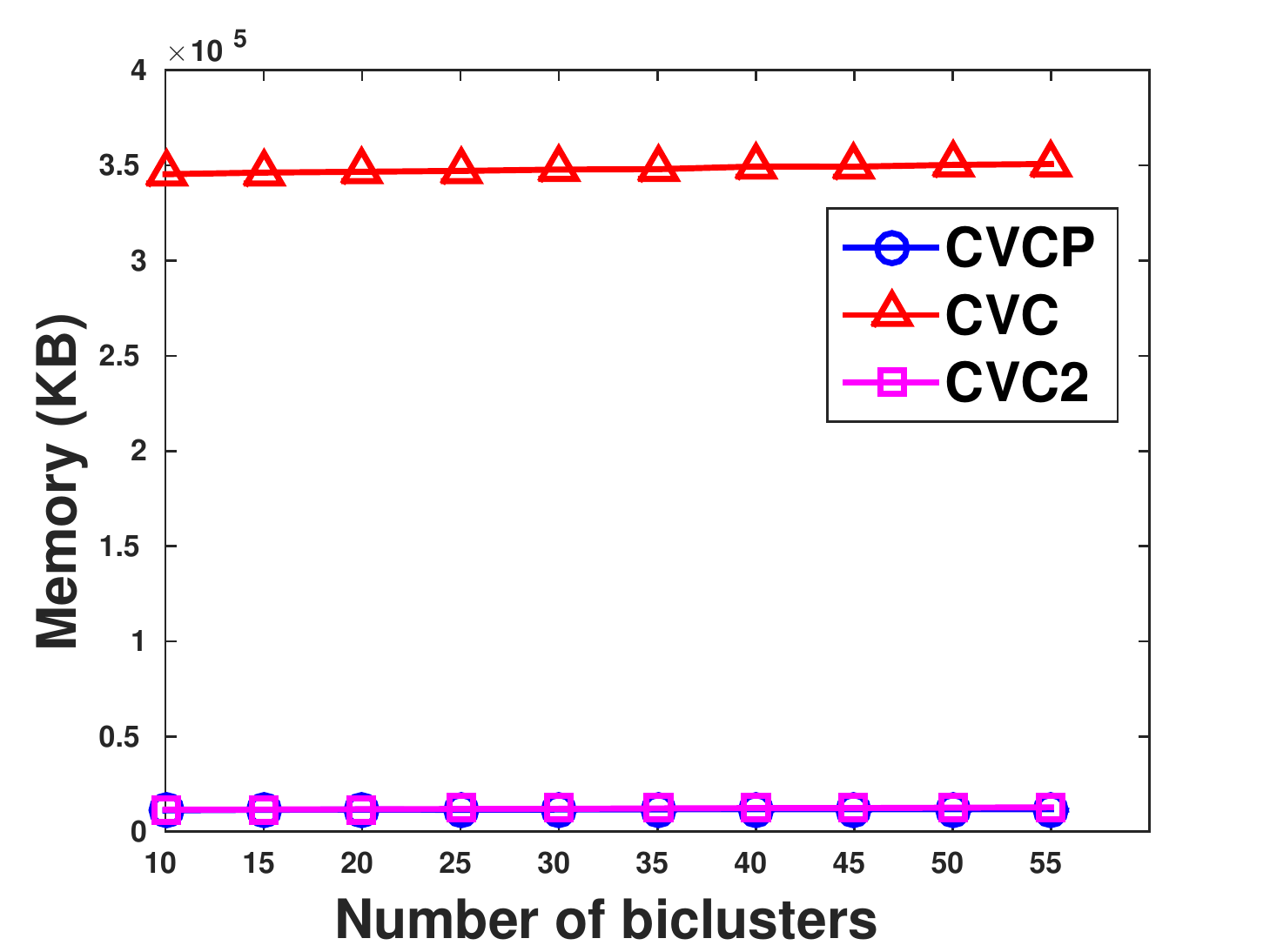}
}
\subfigure[]{
  \includegraphics[trim=0.2cm 0.1cm 0.6cm 0.4cm, clip, scale=0.3]{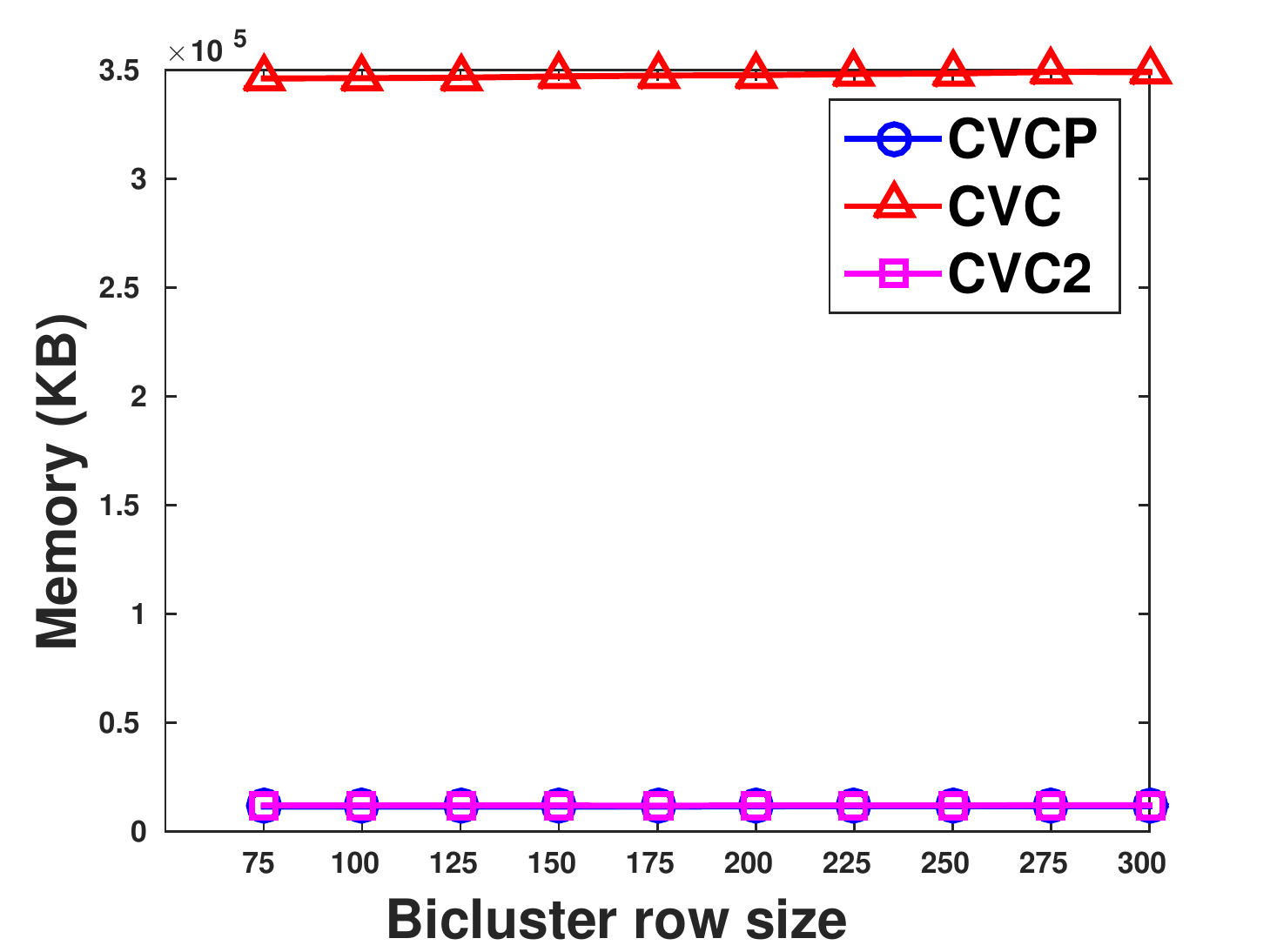}
}
\subfigure[]{
  \includegraphics[trim=0.2cm 0.1cm 0.6cm 0.4cm, clip, scale=0.3]{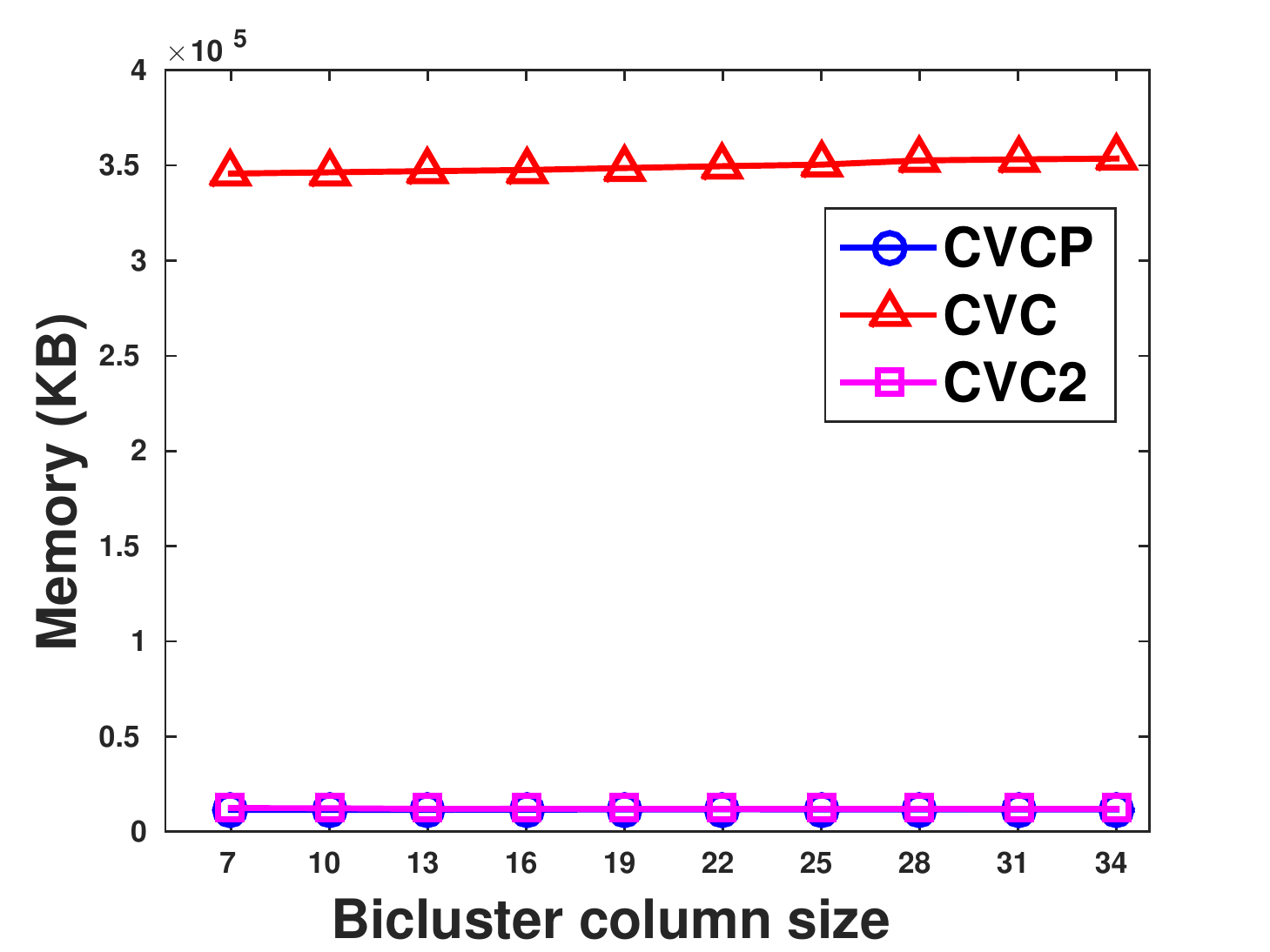}
}
\subfigure[]{
  \includegraphics[trim=0.2cm 0.1cm 0.6cm 0.4cm, clip, scale=0.3]{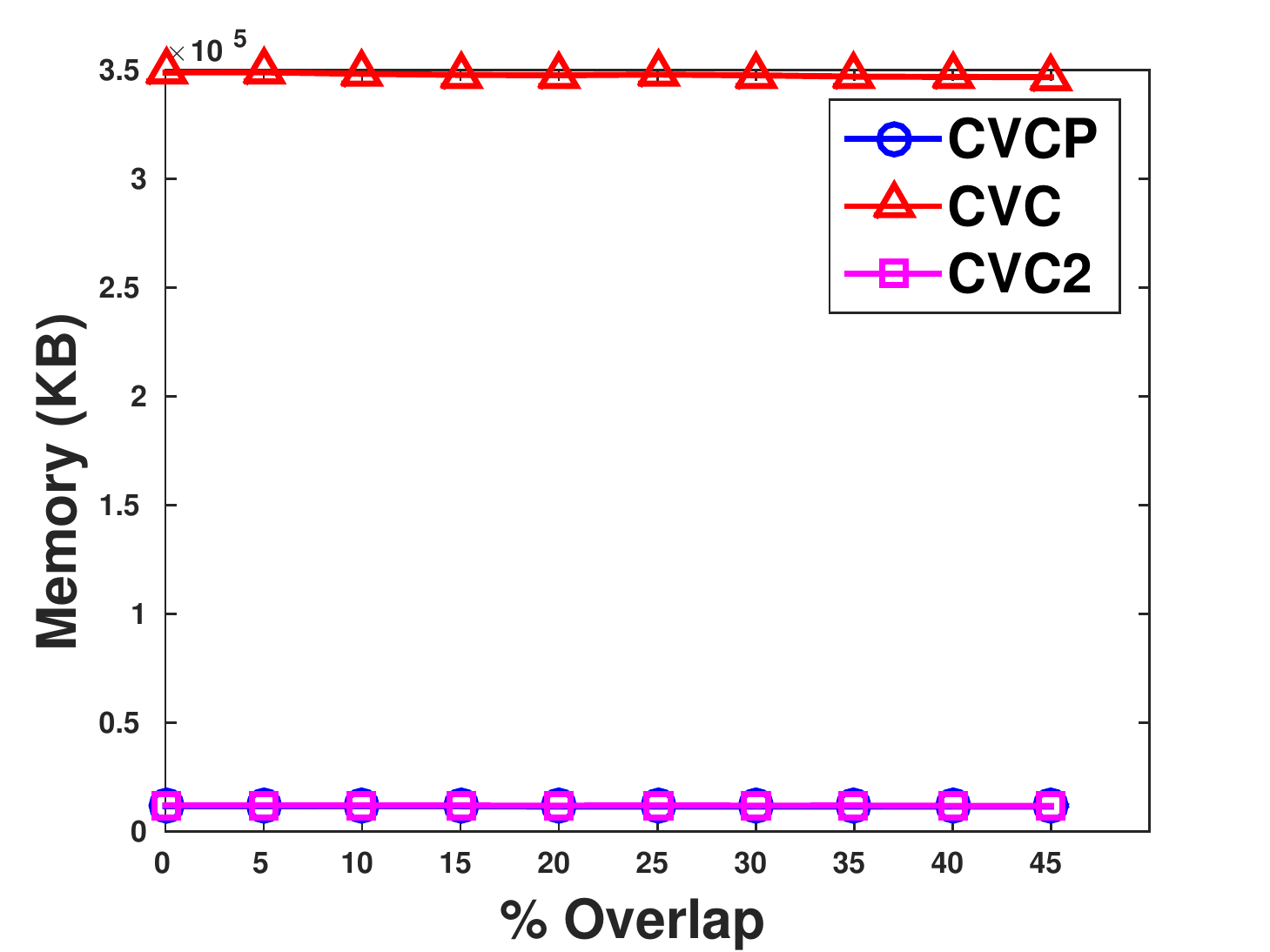}
}
\caption{Memory usage of RIn-Close\_CVCP, RIn-Close\_CVC, and RIn-Close\_CVC2 when varying (a) the number $n$ of rows of the dataset, (b) the number $m$ of columns of the dataset, (c) the number of biclusters in the dataset, (d) the bicluster row size, (e) the bicluster column size, and (f) the overlap among the biclusters.}
\label{fig:expSynDataMEM}
\end{figure*}

\subsection{Real Data}

This experiment aims to test the runtime and memory usage of RIn-Close\_CVC and RIn-Close\_CVC2 when real-world datasets are considered. For this, we used five gene expression datasets, and we varied the user-defined parameter $\epsilon$. This parameter defines the maximum perturbation allowed in the biclusters and, therefore, it is related with the number of biclusters that will be found in a dataset.

Table~\ref{tab:realdatasets} shows the main properties of the five real-world datasets that we used in our experiments. All of them was downloaded from the NCBI repository \footnote{\url{https://www.ncbi.nlm.nih.gov/}}. The attributes of the GDS750, GDS1981, GDS2267 and GDS3035 datasets have a skewed distribution, so we took the logarithm of the values of these datasets. Before taking the logarithm of the values of the GDS2267 dataset, we added a small constant ($1^{-100}$) to its values in order to avoid computations of $\log(0)$. We also scaled the data of each column to real-values between 0 and 1, which enables the usage of the same value of $\epsilon$ for all attributes.

We ran RIn-Close\_CVC and RIn-Close\_CVC2 for 50 times to compute the average runtime and memory usage. We looked for biclusters with at least 3 columns, and Table~\ref{tab:realdatasets} shows the minimum number of rows used for each dataset.

\begin{table}
\centering
\caption{Real-world datasets description.}
 \begin{tabular}{lrcr}
 \toprule
 Name	& Dimension	& Reference & $minRow$ \\
 \midrule
 GDS750	 & $6091 \times 13$ & \cite{LeberEtAl2004}  & 305 \\
 GDS759  & $6350 \times 24$ & \cite{SapraEtAl2004}  & 309 \\
 GDS1981 & $6178 \times 20$ & \cite{CarterEtAl2006} & 32  \\
 GDS2267 & $9335 \times 36$ & \cite{TuEtAl2005}     & 467 \\
 GDS3035 & $9335 \times 48$ & \cite{ShaEtAl2013}    & 467 \\
 \bottomrule
 \end{tabular}
\label{tab:realdatasets}
\end{table}

Figures~\ref{fig:expRealDataNBIC}, \ref{fig:expRealDataRT} and \ref{fig:expRealDataMEM} show, respectively, the number of biclusters, runtime and the memory usage of RIn-Close\_CVC and RIn-Close\_CVC2 for this experiment.

Logically, the number of biclusters is the same for RIn-Close\_CVC and RIn-Close\_CVC2, since both of them enumerate all maximal CVC biclusters. The number of biclusters increased exponentially with the increase of the value of $\epsilon$ for all datasets. For both algorithms, the runtime is proportional to the number of biclusters and follows the same pattern of growth of the number of biclusters. Again, RIn-Close\_CVC2 obtained a better runtime than RIn-Close\_CVC in all scenarios. The gain of RIn-Close\_CVC2 in the memory usage is enormous. It presents a linear growth even though the number of biclusters exhibits an exponential growth with the value of $\epsilon$, which does not happen in the previous version.

\begin{figure*}
\centering
\subfigure[]{
  \includegraphics[trim=0.4cm 0.1cm 1.4cm 0.15cm, clip, scale=0.35]{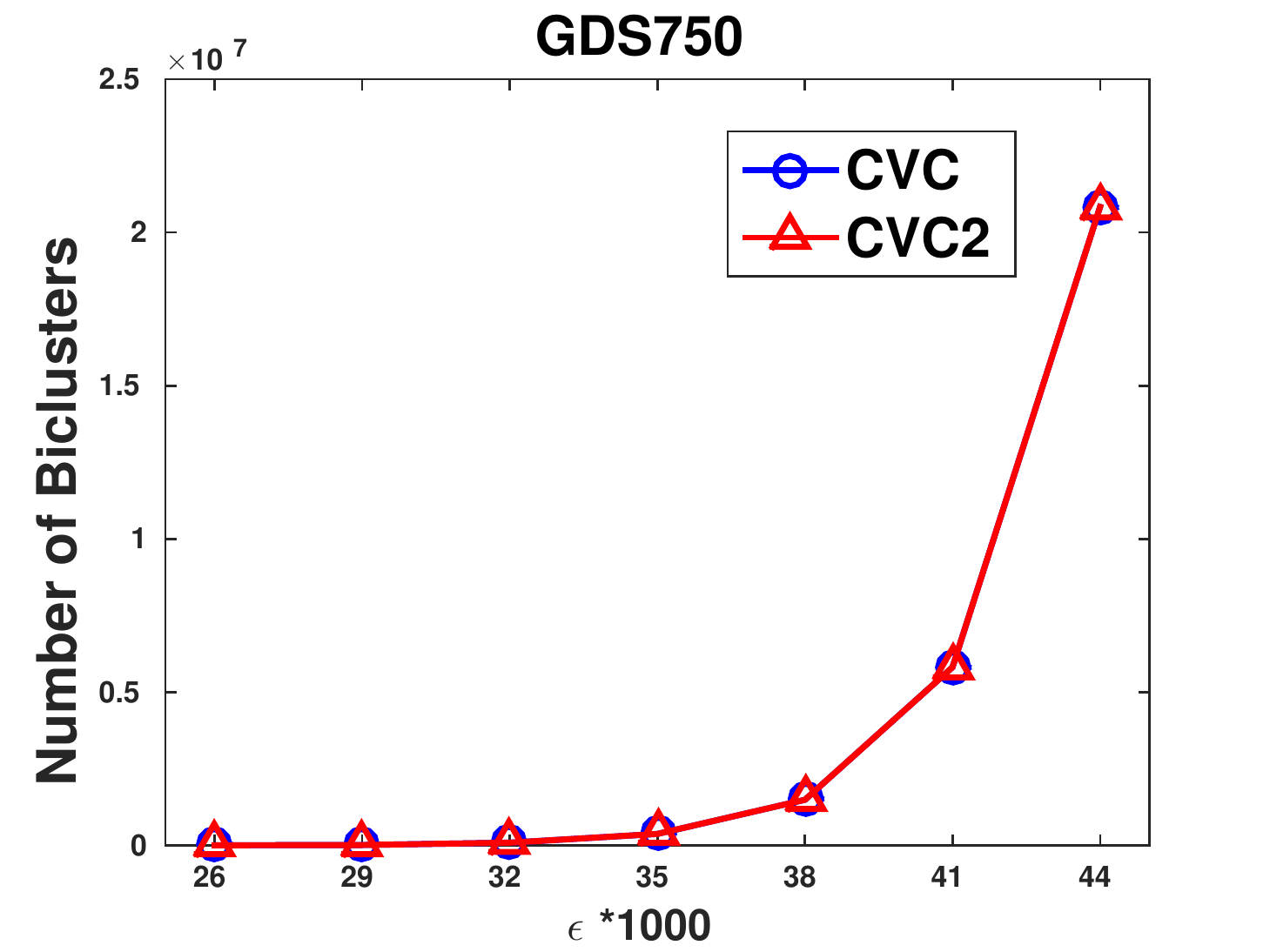}
}
\subfigure[]{
  \includegraphics[trim=0.4cm 0.1cm 1.4cm 0.15cm, clip, scale=0.35]{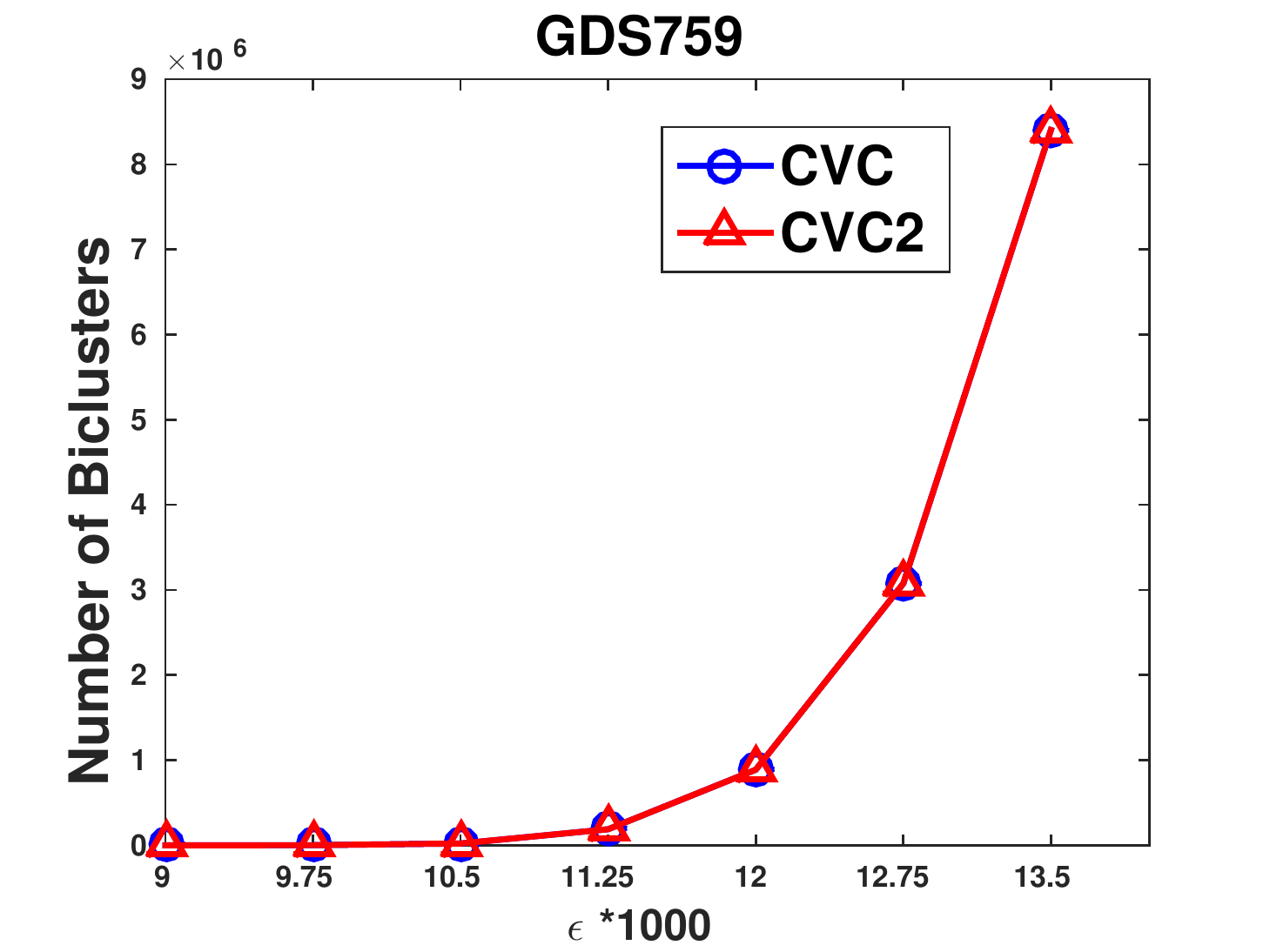}
}
\subfigure[]{
  \includegraphics[trim=0.4cm 0.1cm 1.4cm 0.15cm, clip, scale=0.35]{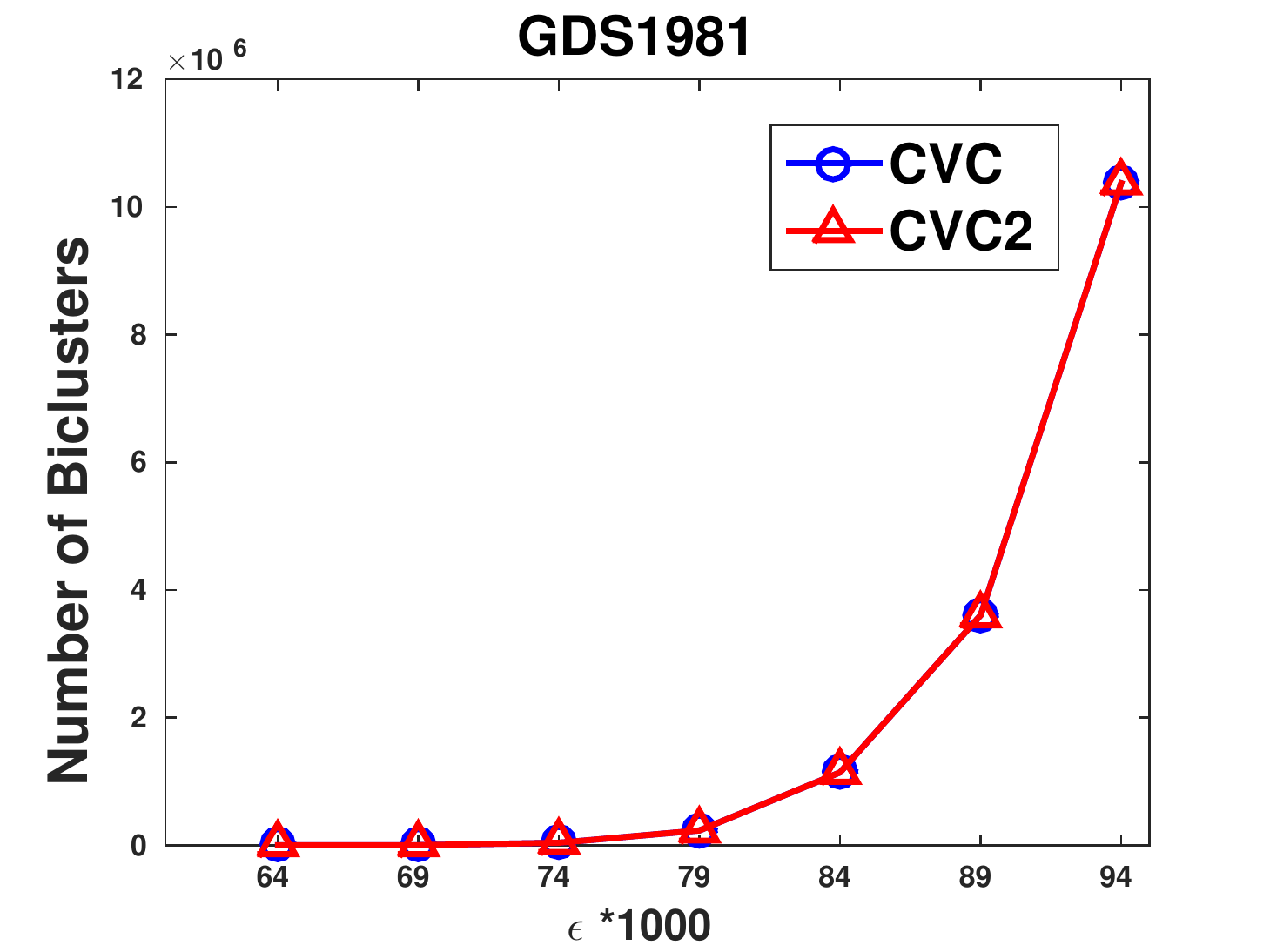}
}
\subfigure[]{
  \includegraphics[trim=0.4cm 0.1cm 1.4cm 0.15cm, clip, scale=0.35]{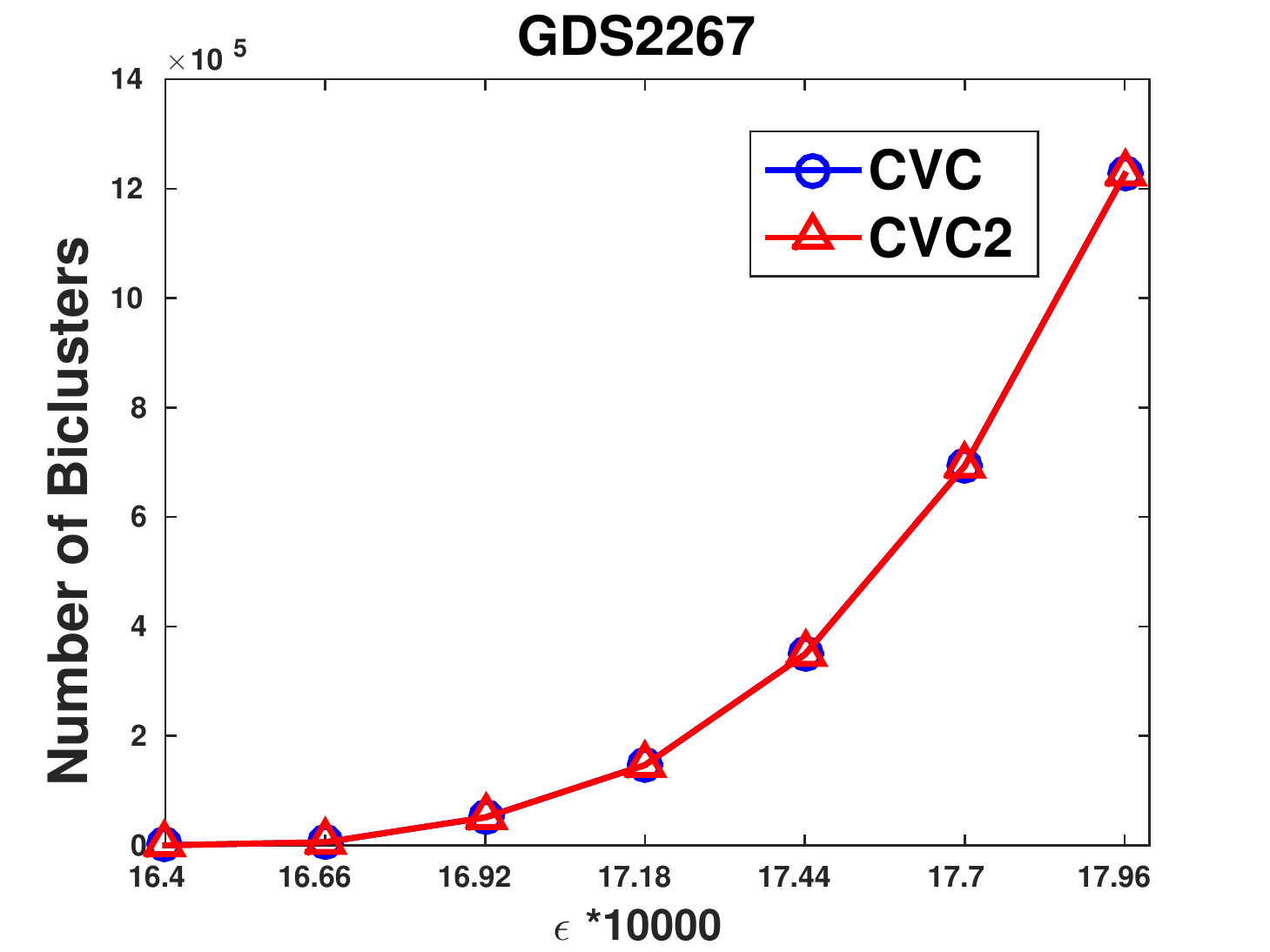}
}
\subfigure[]{
  \includegraphics[trim=0.4cm 0.1cm 1.4cm 0.15cm, clip, scale=0.35]{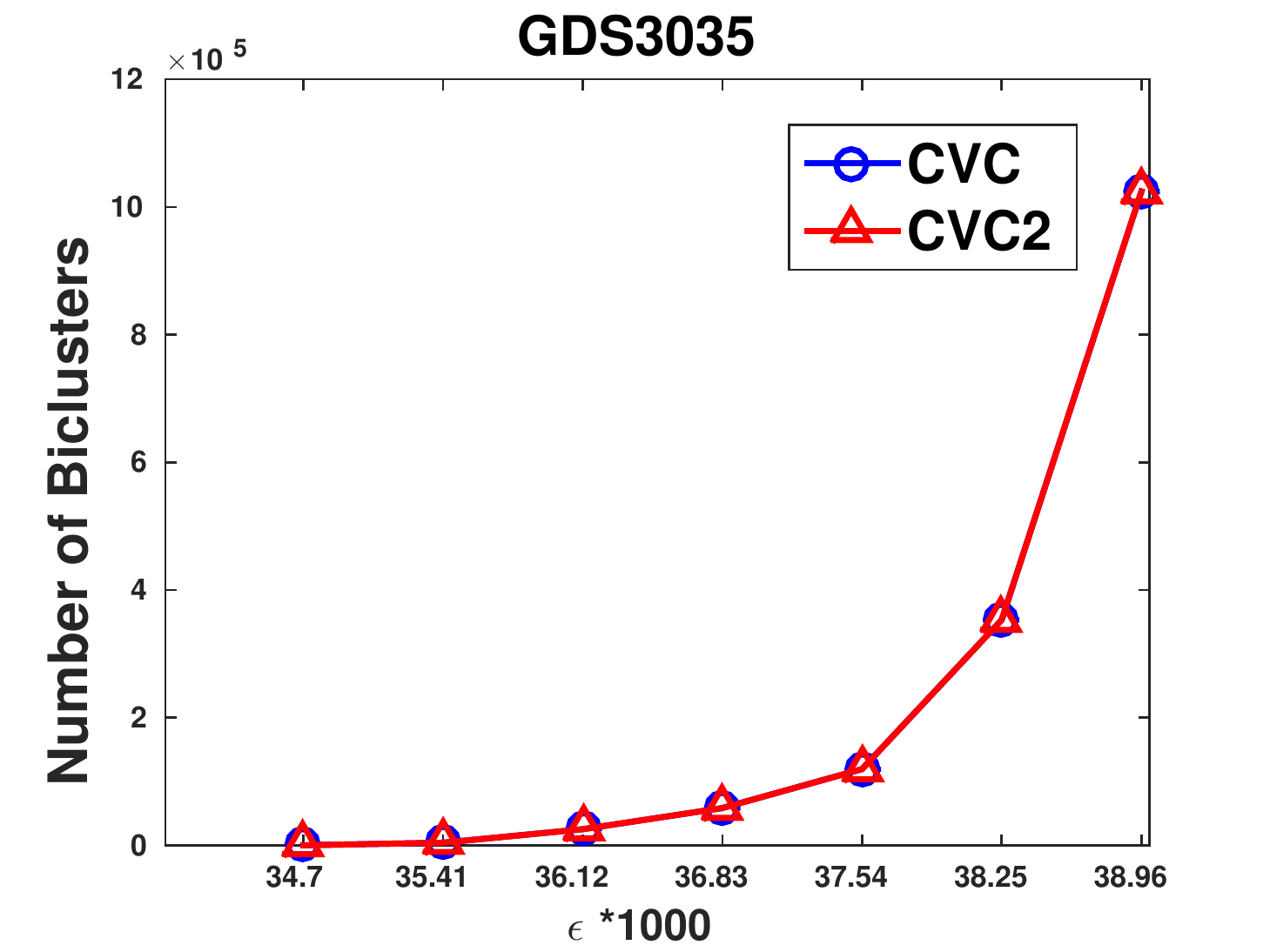}
}
\caption{Number of biclusters when varying the user-defined parameter $\epsilon$ for the datasets (a) GDS750, (b) GDS759, (c) GDS1981, (d) GDS2267, and (e) GDS3035.}
\label{fig:expRealDataNBIC}
\end{figure*}

\begin{figure*}
\centering
\subfigure[]{
  \includegraphics[trim=0.15cm 0.1cm 1.4cm 0.15cm, clip, scale=0.35]{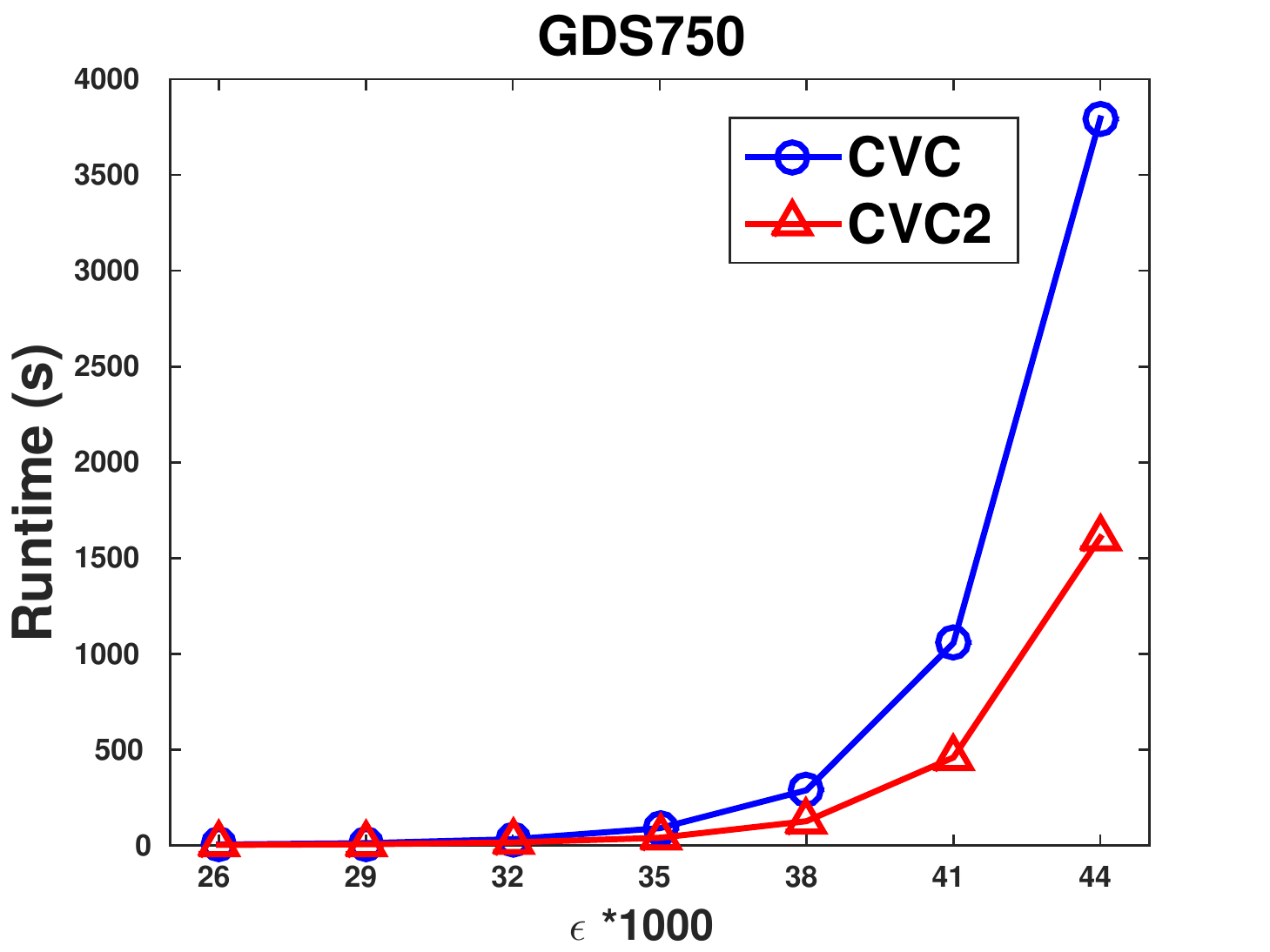}
}
\subfigure[]{
  \includegraphics[trim=0.15cm 0.1cm 1.4cm 0.15cm, clip, scale=0.35]{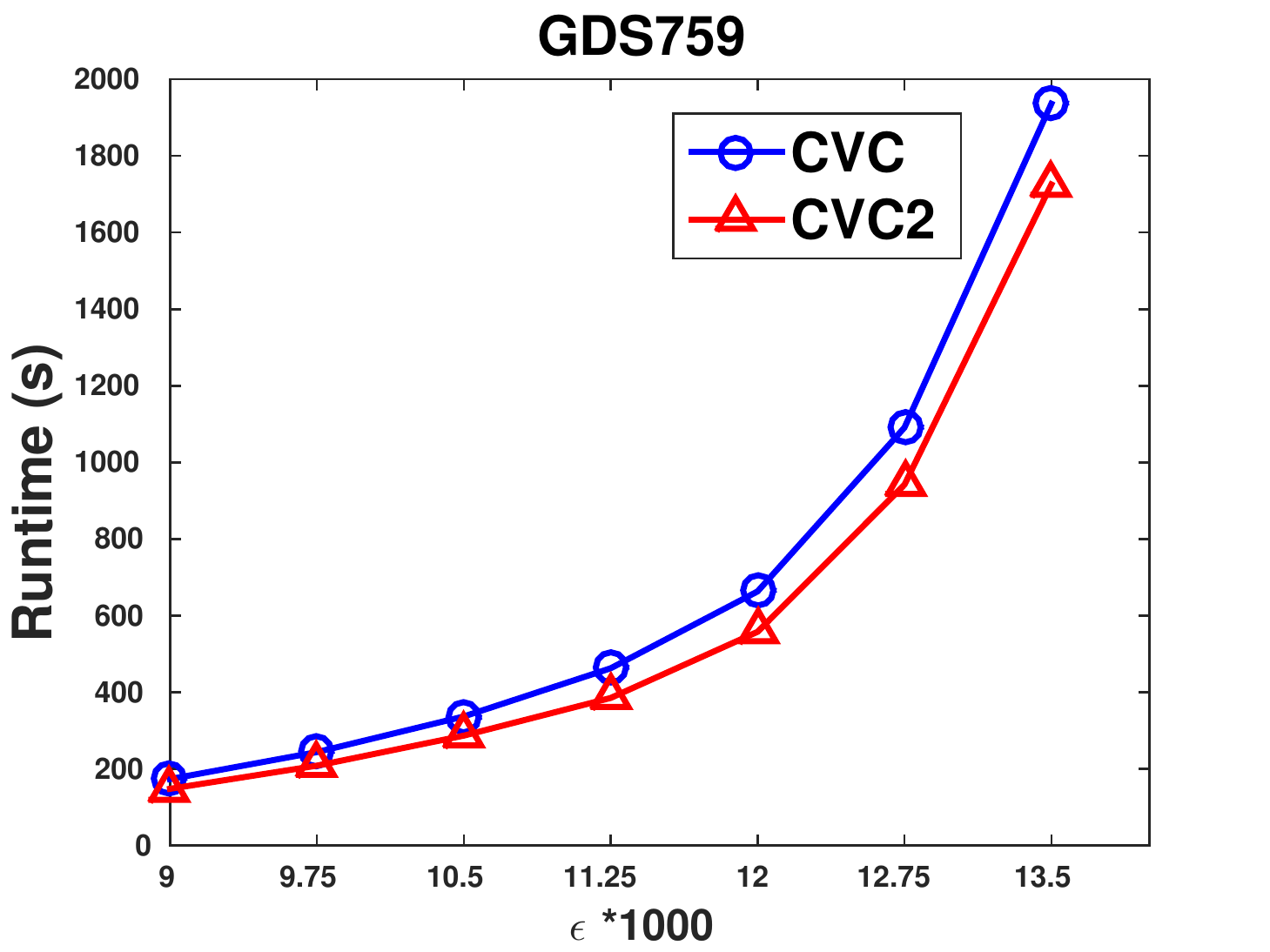}
}
\subfigure[]{
  \includegraphics[trim=0.15cm 0.1cm 1.4cm 0.15cm, clip, scale=0.35]{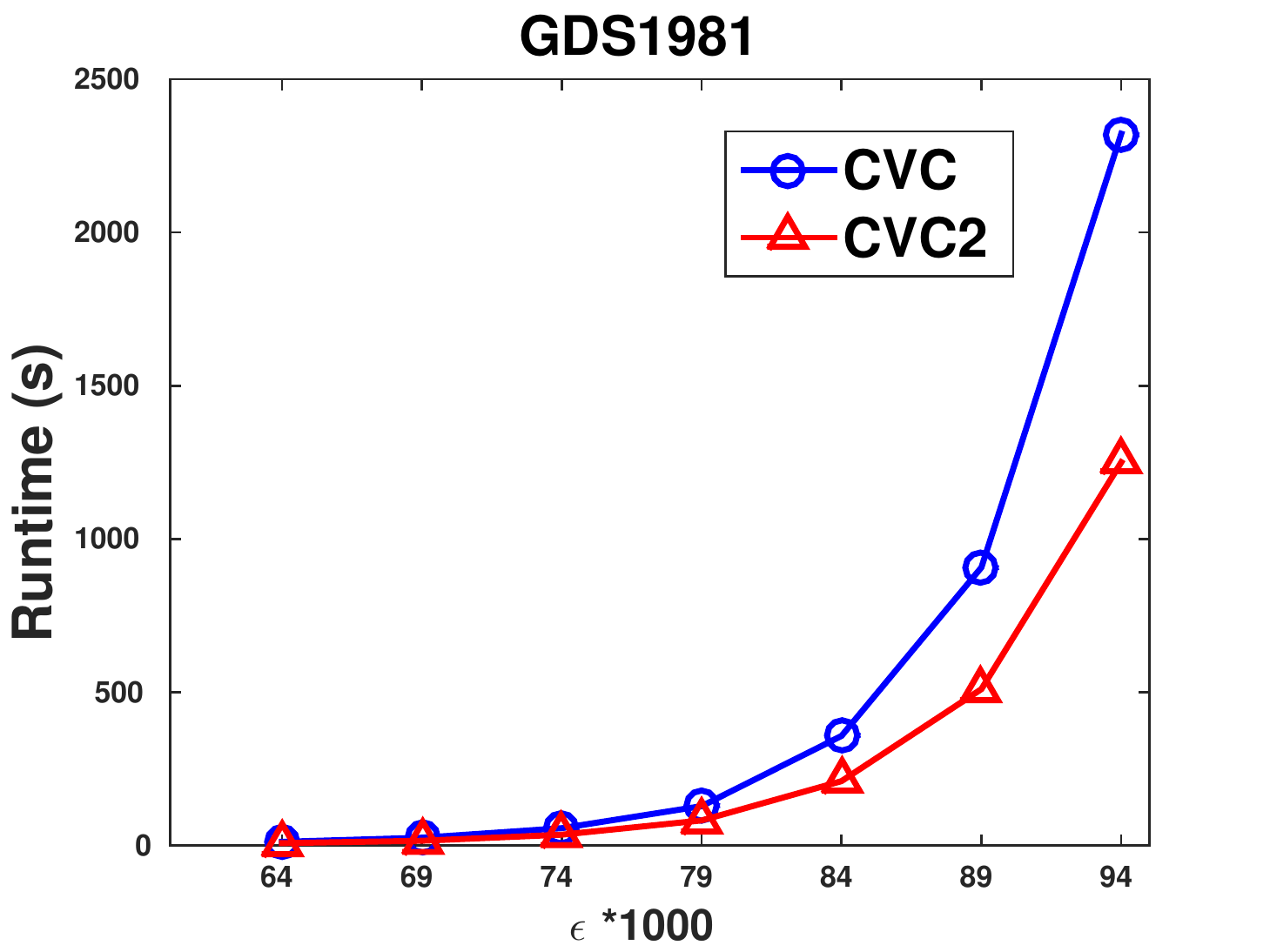}
}
\subfigure[]{
  \includegraphics[trim=0.15cm 0.1cm 1.4cm 0.15cm, clip, scale=0.35]{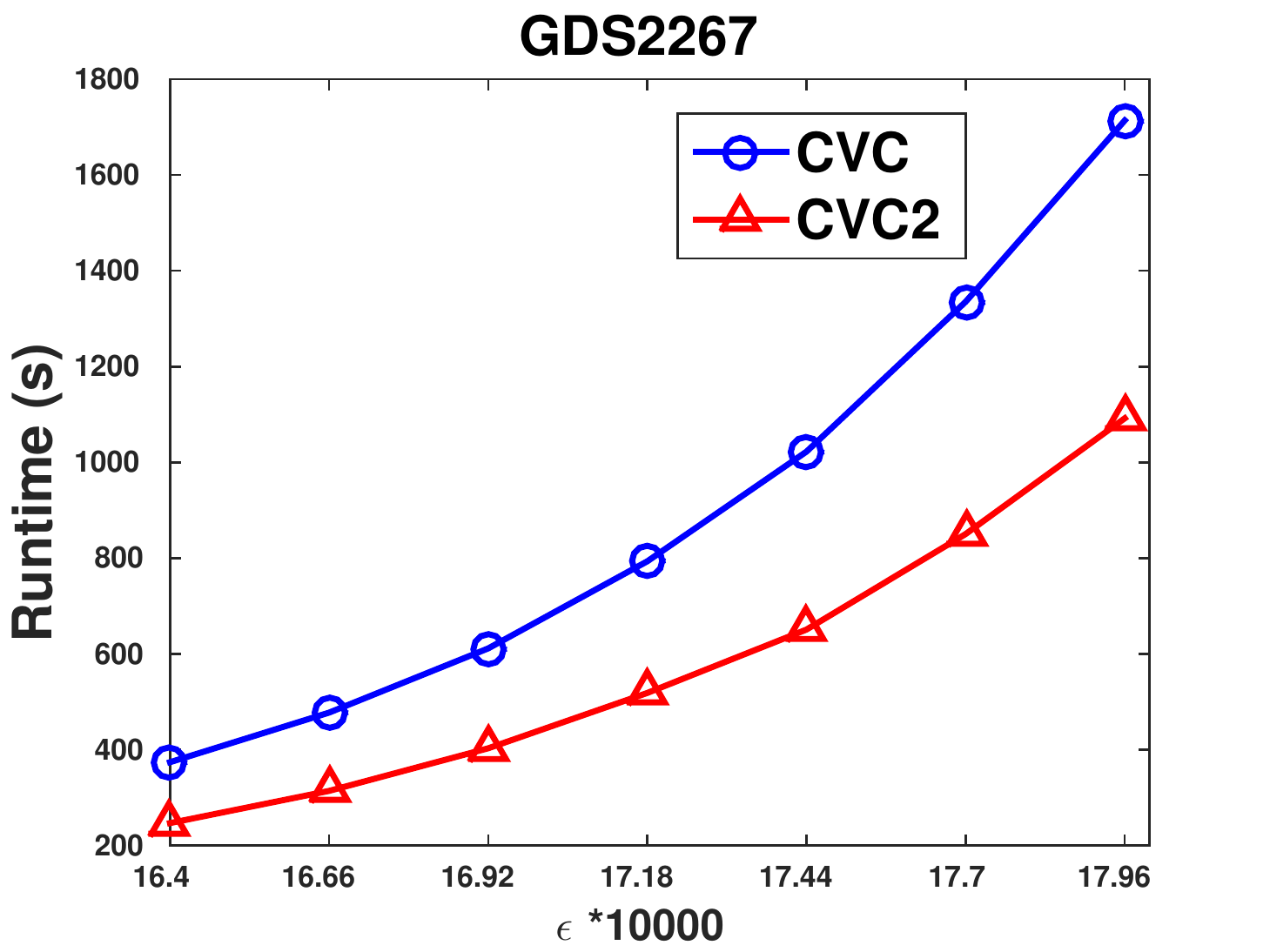}
}
\subfigure[]{
  \includegraphics[trim=0.15cm 0.1cm 1.4cm 0.15cm, clip, scale=0.35]{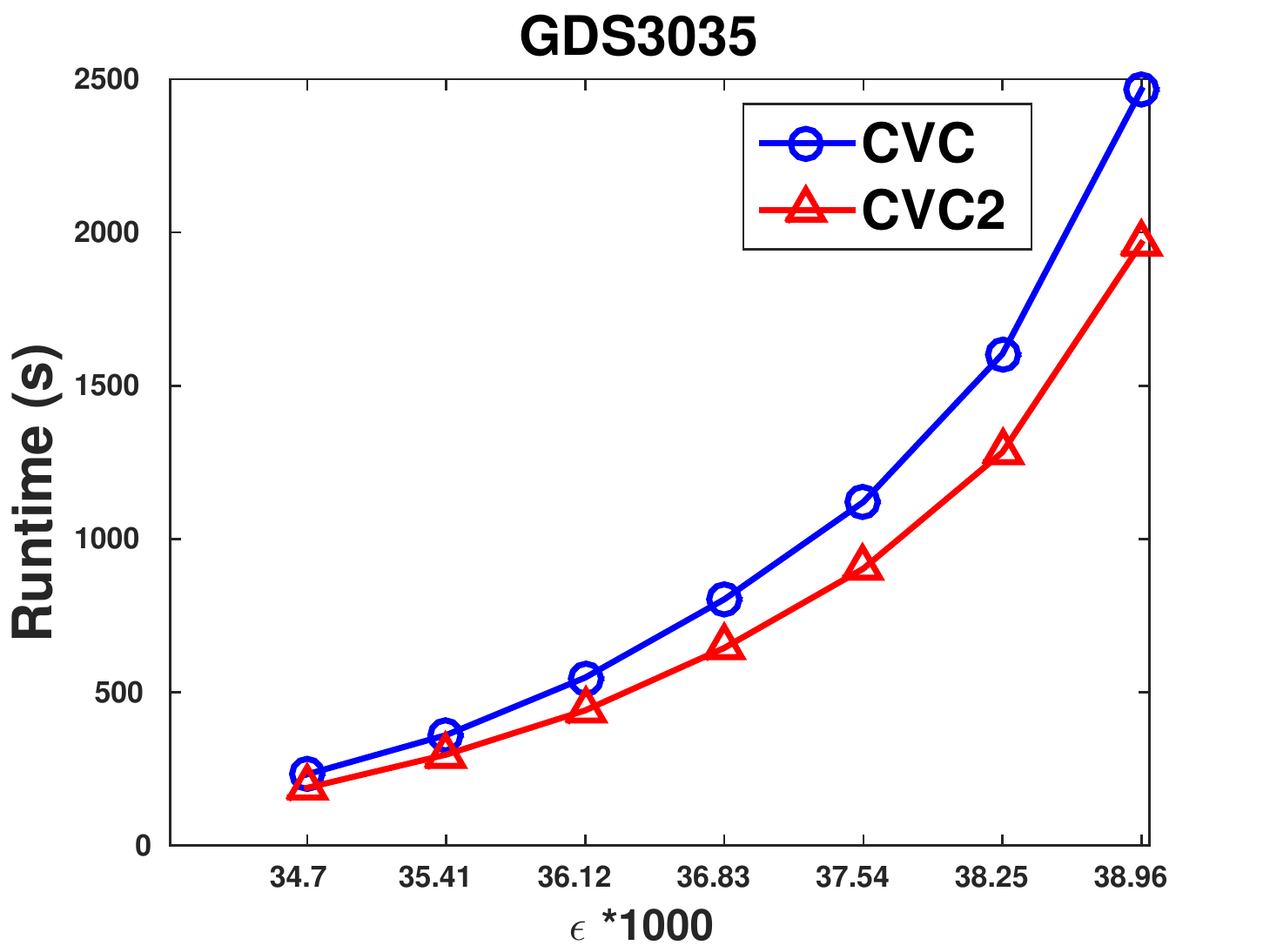}
}
\caption{Runtime of RIn-Close\_CVC and RIn-Close\_CVC2 when varying the user-defined parameter $\epsilon$ (which controls the maximum perturbation of the biclusters) for the datasets (a) GDS750, (b) GDS759, (c) GDS1981, (d) GDS2267, and (e) GDS3035.}
\label{fig:expRealDataRT}
\end{figure*}

\begin{figure*}
\centering
\subfigure[]{
  \includegraphics[trim=0.4cm 0.1cm 1.4cm 0.2cm, clip, scale=0.35]{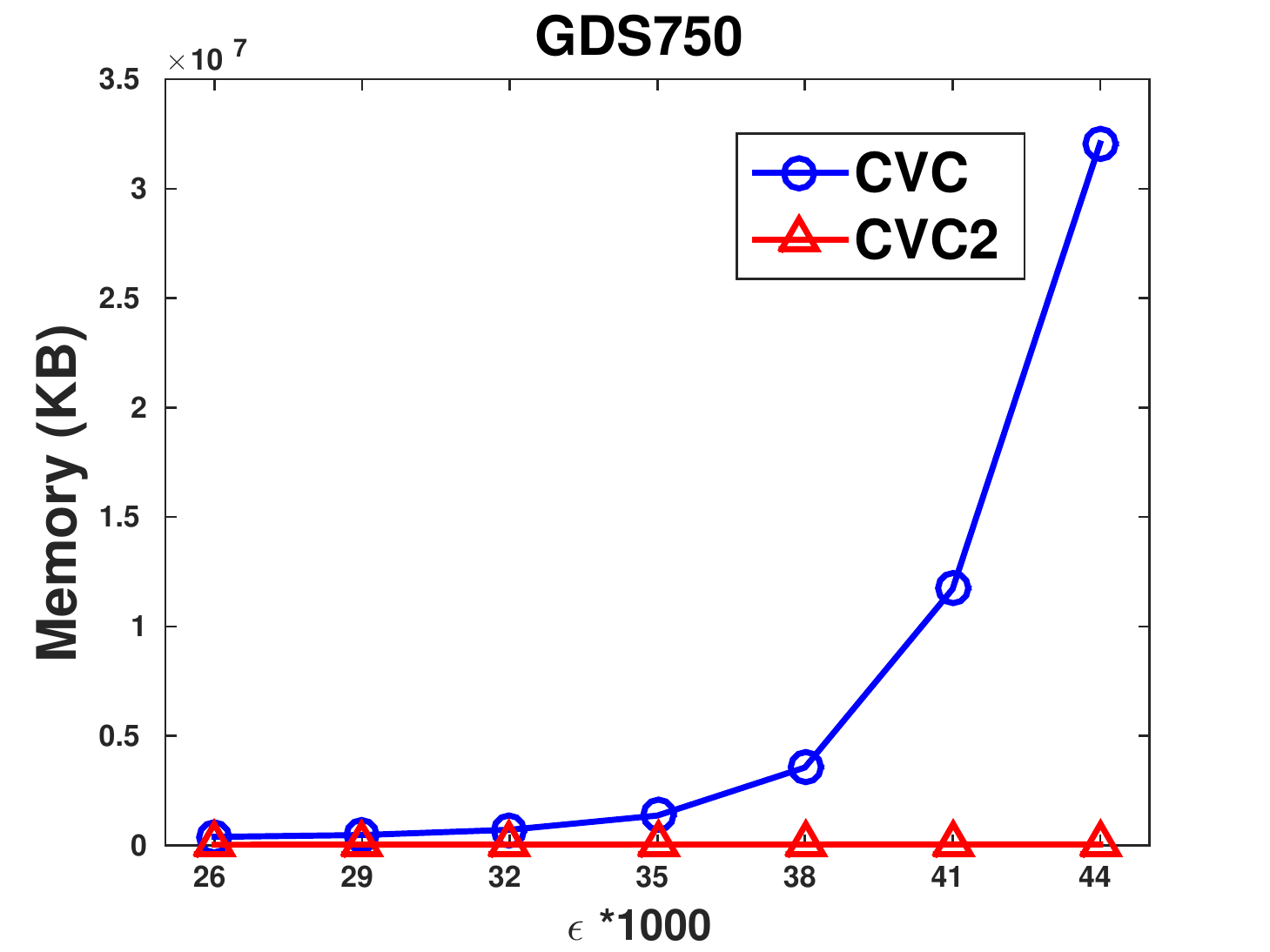}
}
\subfigure[]{
  \includegraphics[trim=0.4cm 0.1cm 1.4cm 0.2cm, clip, scale=0.35]{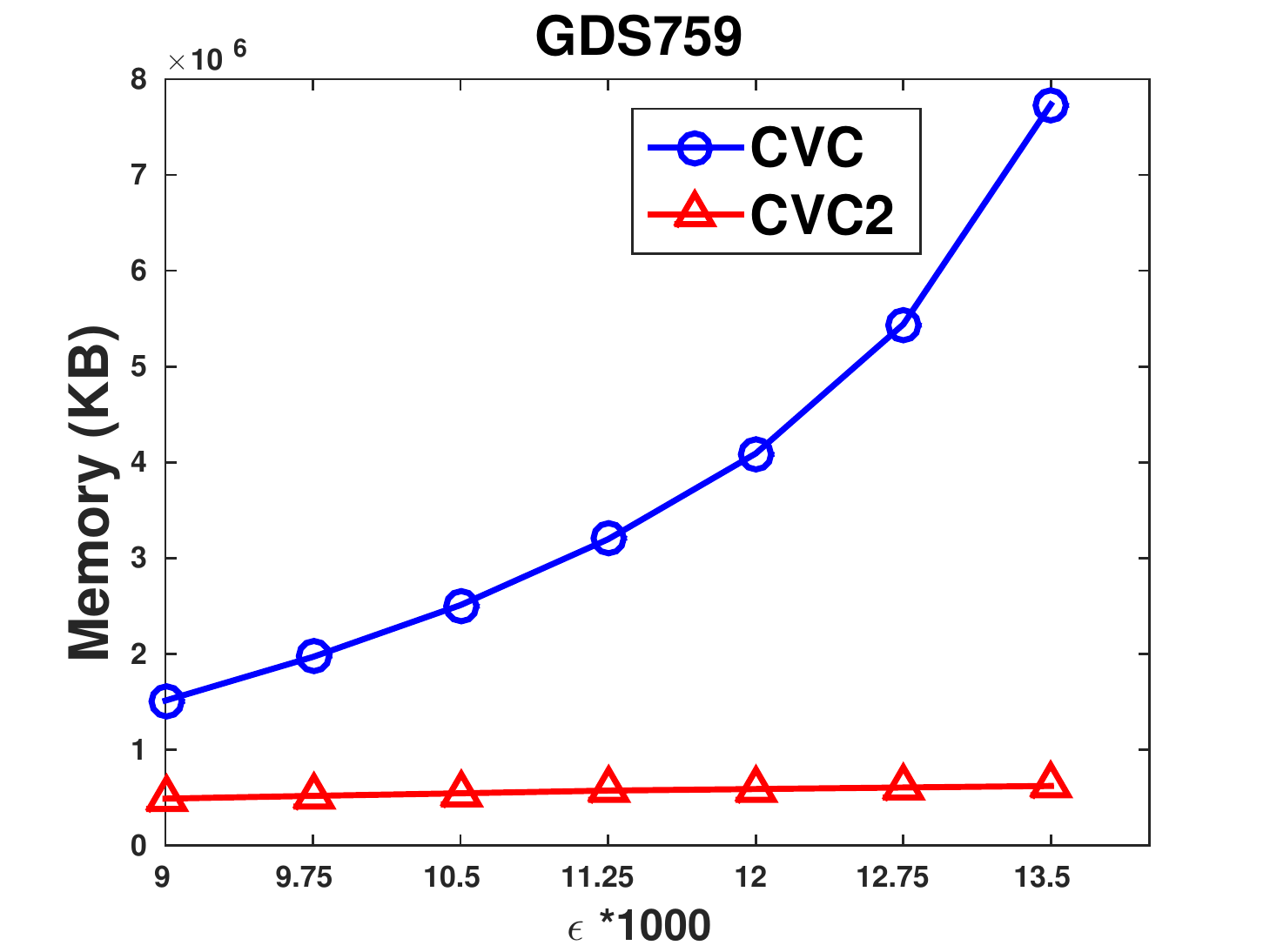}
}
\subfigure[]{
  \includegraphics[trim=0.4cm 0.1cm 1.4cm 0.2cm, clip, scale=0.35]{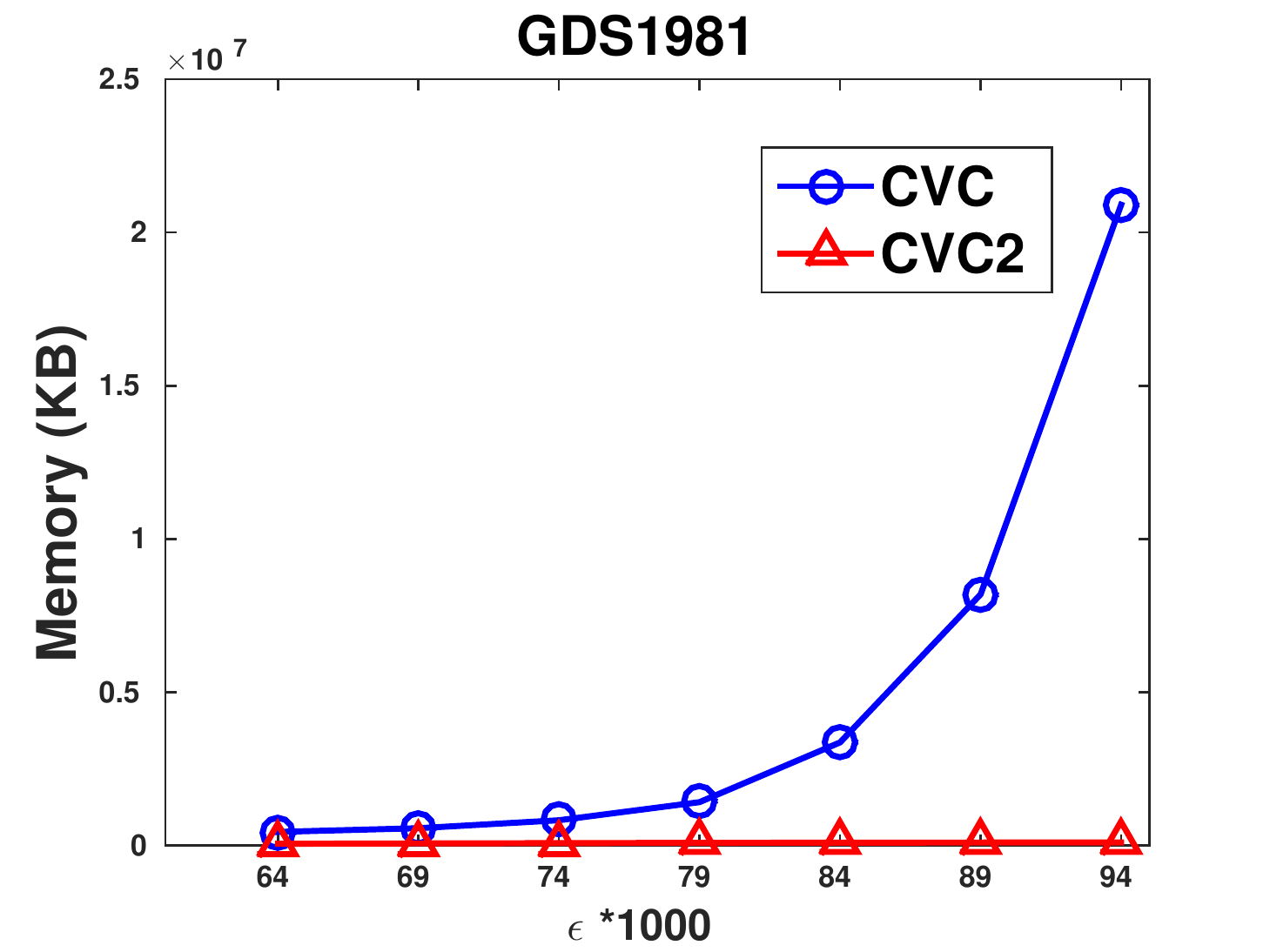}
}
\subfigure[]{
  \includegraphics[trim=0.4cm 0.1cm 1.4cm 0.2cm, clip, scale=0.35]{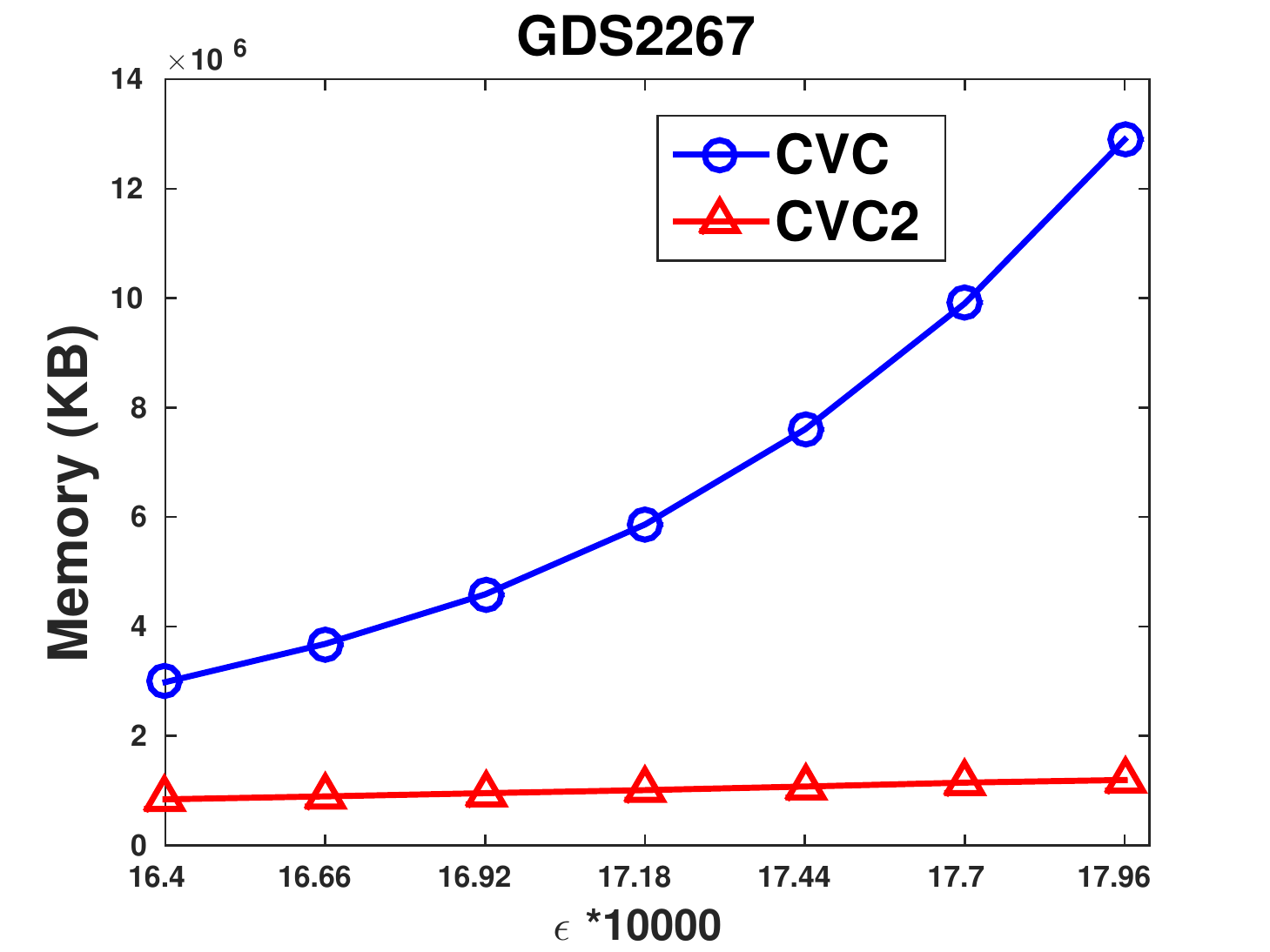}
}
\subfigure[]{
  \includegraphics[trim=0.4cm 0.1cm 1.4cm 0.2cm, clip, scale=0.35]{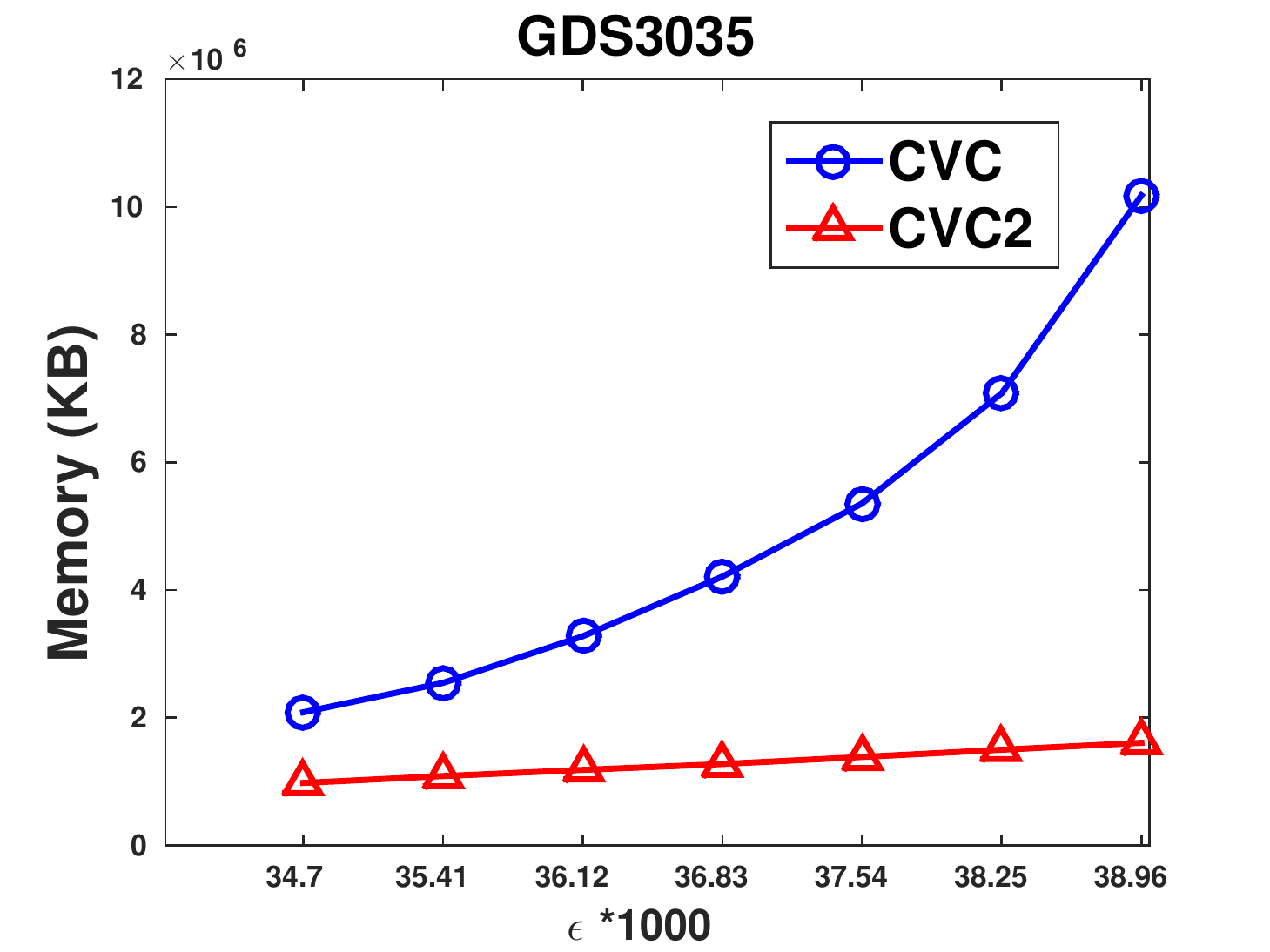}
}
\caption{Memory usage of RIn-Close\_CVC and RIn-Close\_CVC2 when varying the user-defined parameter $\epsilon$ (which controls the maximum perturbation of the biclusters) for the datasets (a) GDS750, (b) GDS759, (c) GDS1981, (d) GDS2267, and (e) GDS3035.}
\label{fig:expRealDataMEM}
\end{figure*}

\section{Concluding Remarks}
\label{sec:conclusion}

In this paper, we proposed a new version of RIn-Close\_CVC, named RIn-Close\_CVC2, which brings a large reduction in memory usage, and also, in average, significant runtime gain, even having a higher worst-case time-complexity. The new algorithm also keeps the four key properties of its predecessor: efficiency, completeness, correctness and non-redundancy. We proved it here, after demonstrating that RIn-Close\_CVCP and RIn-Close\_CVC also exhibit these four properties.

The results of our experiments with synthetic datasets showed that the memory usage of RIn-Close\_CVC2 was equivalent to the memory usage of RIn-Close\_CVCP, known to be very attractive. Also, the results of our experiments with real-world datasets showed that RIn-Close\_CVC2 presented a linear growth in the memory usage, even though the number of biclusters exhibited an exponential growth with the value of admissible residue $\epsilon$. They are great achievements, opening the possibility of enumerating perturbed maximal CVC biclusters in previously infeasible scenarios. Moreover, the new version also achieved better results in terms of runtime in our experiments. Thus, the new version dominates its predecessor in all relevant scalability aspects.

In future works, we are planning to test another initiative that may promote gain in the computational performance of RIn-Close algorithms: to incorporate the distinctive aspects of the recently proposed In-Close4 \cite{Andrews2017} and In-Close5 \cite{Andrews2018} algorithms, both restricted to deal with binary datasets, into our RIn-Close family of algorithms, which are devoted to mine biclusters in numerical datasets and were based on In-Close2 \cite{Andrews2009}.

\section*{Acknowledgments}

R. Veroneze and F. J. Von Zuben would like to thank FAPESP (Process Number: 2017/21174-8), CAPES and CNPq (Process Number 309115/2014-0) for the financial support.

\section*{References}


{\footnotesize
\bibliography{tese}}

\begin{thebibliography}{10}

\bibitem{Andrews2009}
{\sc S.~Andrews}, {\em In-{C}lose, a fast algorithm for computing formal
  concepts}, in International Conference on Conceptual Structures, 2009.

\bibitem{Andrews2011}
\leavevmode\vrule height 2pt depth -1.6pt width 23pt, {\em In-{C}lose2, a high
  performance formal concept miner}, in International Conference on Conceptual
  Structures, 2011, pp.~50--62.

\bibitem{Andrews2015}
\leavevmode\vrule height 2pt depth -1.6pt width 23pt, {\em A
  ‘best-of-breed’approach for designing a fast algorithm for computing
  fixpoints of galois connections}, Information Sciences, 295 (2015),
  pp.~633--649.

\bibitem{Andrews2017}
\leavevmode\vrule height 2pt depth -1.6pt width 23pt, {\em Making use of empty
  intersections to improve the performance of cbo-type algorithms}, in
  International Conference on Formal Concept Analysis, Springer, 2017,
  pp.~56--71.

\bibitem{Andrews2018}
\leavevmode\vrule height 2pt depth -1.6pt width 23pt, {\em A new method for
  inheriting canonicity test failures in close-by-one type algorithms}, in The
  14th International Conference on Concept Lattices and Their Applications,
  2018.

\bibitem{Besson2007}
{\sc J.~Besson, C.~Robardet, L.~De~Raedt, and J.-F. Boulicaut}, {\em Mining
  bi-sets in numerical data}, in Knowledge Discovery in Inductive Databases,
  Springer, 2007, pp.~11--23.

\bibitem{CarpinetoEtAl2004}
{\sc C.~Carpineto and G.~Romano}, {\em Concept data analysis: Theory and
  applications}, John Wiley \& Sons, 2004.

\bibitem{CarterEtAl2006}
{\sc G.~W. Carter, S.~Rupp, G.~R. Fink, and T.~Galitski}, {\em Disentangling
  information flow in the ras-camp signaling network}, Genome research, 16
  (2006), pp.~520--526.

\bibitem{EppsteinEtAL2010}
{\sc D.~Eppstein, M.~L{\"o}ffler, and D.~Strash}, {\em Listing all maximal
  cliques in sparse graphs in near-optimal time}, in Algorithms and
  Computation, Springer, 2010, pp.~403--414.

\bibitem{Ganter1984}
{\sc B.~Ganter}, {\em Two basic algorithms in concept analysis}, tech. rep.,
  FB4-Preprint No. 831, 1984.

\bibitem{Krajca2010}
{\sc P.~Krajca, J.~Outrata, and V.~Vychodil}, {\em Advances in algorithms based
  on cbo}, in Proceedings of the 8th International Conference on Concept
  Lattices and Their Applications, vol.~672, 2010, pp.~325--337.

\bibitem{Kuznetsov1996}
{\sc S.~O. Kuznetsov}, {\em Mathematical aspects of concept analysis}, Journal
  of Mathematical Sciences, 80 (1996), pp.~1654--1698.

\bibitem{Kuznetsov1999}
\leavevmode\vrule height 2pt depth -1.6pt width 23pt, {\em Learning of simple
  conceptual graphs from positive and negative examples}, in Principles of Data
  Mining and Knowledge Discovery, 1999, pp.~384--391.

\bibitem{LeberEtAl2004}
{\sc J.~H. Leber, S.~Bernales, and P.~Walter}, {\em Ire1-independent gain
  control of the unfolded protein response}, PLoS biology, 2 (2004), p.~e235.

\bibitem{MadeiraOliveira2004}
{\sc S.~C. Madeira and A.~L. Oliveira}, {\em Biclustering algorithms for
  biological data analysis: a survey}, IEEE/ACM Transactions on Computational
  Biology and Bioinformatics, 1 (2004), pp.~24--45.

\bibitem{MakinoEtAl2004}
{\sc K.~Makino and T.~Uno}, {\em New algorithms for enumerating all maximal
  cliques}, in Proceedings of the 9th Scandinavian Workshop on Algorithm
  Theory, 2004, pp.~260--272.

\bibitem{TanEtAl2005}
{\sc T.~Pang-Ning, M.~Steinbach, and V.~Kumar}, {\em Introduction to Data
  Mining}, Pearson, 2005.

\bibitem{SapraEtAl2004}
{\sc A.~K. Sapra, Y.~Arava, P.~Khandelia, and U.~Vijayraghavan}, {\em
  Genome-wide analysis of pre-mrna splicing intron features govern the
  requirement for the second-step factor, prp17 in saccharomyces cerevisiae and
  schizosaccharomyces pombe}, Journal of Biological Chemistry, 279 (2004),
  pp.~52437--52446.

\bibitem{ShaEtAl2013}
{\sc W.~Sha, A.~M. Martins, R.~Laubenbacher, P.~Mendes, and V.~Shulaev}, {\em
  The genome-wide early temporal response of saccharomyces cerevisiae to
  oxidative stress induced by cumene hydroperoxide}, PloS one, 8 (2013),
  p.~e74939.

\bibitem{TuEtAl2005}
{\sc B.~P. Tu, A.~Kudlicki, M.~Rowicka, and S.~L. McKnight}, {\em Logic of the
  yeast metabolic cycle: temporal compartmentalization of cellular processes},
  Science, 310 (2005), pp.~1152--1158.

\bibitem{UnoEtAL2004}
{\sc T.~Uno, M.~Kiyomi, and H.~Arimura}, {\em Lcm ver. 2: Efficient mining
  algorithms for frequent/closed/maximal itemsets}, in Workshop on Frequent
  Itemset Mining Implementations, vol.~19, 2004, p.~30.

\bibitem{Veroneze2016}
{\sc R.~Veroneze}, {\em Enumerating all maximal biclusters in numerical
  datasets}, PhD thesis, University of Campinas, Brazil, 2016.

\bibitem{VeronezeEtAl2017}
{\sc R.~Veroneze, A.~Banerjee, and F.~J. Von~Zuben}, {\em Enumerating all
  maximal biclusters in numerical datasets}, Information Sciences, 379 (2017),
  pp.~288--309.

\bibitem{VeronezeVonZuben2017report}
{\sc R.~Veroneze and F.~J. Von~Zuben}, {\em Efficient mining of maximal
  biclusters in mixed-attribute datasets}, arXiv preprint arXiv:1710.03289,
  (2017).

\bibitem{Zaki2000}
{\sc M.~J. Zaki}, {\em Generating non-redundant association rules}, in
  Proceedings of the sixth ACM SIGKDD international conference on Knowledge
  discovery and data mining, ACM, 2000, pp.~34--43.

\bibitem{ZakiEtAL2002}
{\sc M.~J. Zaki and C.-J. Hsiao}, {\em {CHARM}: An efficient algorithm for
  closed itemset mining}, in Proceedings of the 2002 SIAM International
  Conference on Data Mining, vol.~2, 2002, pp.~457--473.

\end{thebibliography}

\vfill
{\footnotesize
\textbf{Rosana Veroneze} is a postdoctoral researcher at the Department of Computer Engineering and Industrial Automation, School of Electrical and Computer Engineering, University of Campinas (Unicamp). Her research interests include computational intelligence, data mining and machine learning areas.\\

\textbf{Fernando J. Von Zuben} is a Full Professor at the Department of Computer Engineering and Industrial Automation, School of Electrical and Computer Engineering, University of Campinas (Unicamp). The main topics of his research are computational intelligence, bioinspired computing, multivariate data analysis, and machine learning.

}
\end{document}